\documentclass[journal]{IEEEtran}
\usepackage{amsmath,graphicx}
\usepackage{amsthm}
\newtheorem{theorem}{Theorem}
\usepackage{algorithm,algpseudocode}
\usepackage{url}
\usepackage{authblk}
\usepackage{booktabs}
\usepackage{epstopdf}
\usepackage{amssymb}
\usepackage{tikz}
\hyphenation{op-tical net-works semi-conduc-tor}

\usepackage{pbox}

\def\vect{\mathbf}
\def\matr{\mathbf}
\def\mrm{\mathrm}

\DeclareMathOperator*{\argmax}{argmax}

\newcommand{\secureagg}{\mathsf{SecureAgg}}

\usepackage{cite}
\usepackage{hyperref}
\newcommand{\norm}[1]{ \left\| #1 \right \| }

\newcommand{\capepca}{\mathsf{capePCA}}
\newcommand{\dpdjica}{\mathsf{DP-djICA}}
\newcommand{\local}{\mathsf{local\ DP-ICA}}
\newcommand{\cape}{\mathsf{CAPE}}

\newcommand{\capedjica}{\mathsf{capeDJICA}}

\newcommand{\djICA}{\textsf{djICA}}

\usepackage{mathtools}
\DeclarePairedDelimiter{\ceil}{\lceil}{\rceil}
\DeclarePairedDelimiter\floor{\lfloor}{\rfloor}

\usepackage[normalem]{ulem}
\newcommand\redout{\bgroup\markoverwith{\textcolor{red}{\rule[.5ex]{2pt}{0.4pt}}}\ULon}

\newtheorem{lemma}{Lemma}
\newtheorem{Def}{Definition}
\newtheorem{Prop}{Proposition}

\newtheorem{Rem}{Remark}
\ifCLASSINFOpdf

\else

\fi

\hyphenation{op-tical net-works semi-conduc-tor}

\begin{document}

\title{Improved Differentially Private Decentralized Source Separation for fMRI Data}

\author{
\IEEEauthorblockN{Hafiz Imtiaz,~
								Jafar Mohammadi,~\IEEEmembership{Member,~IEEE},
								Rogers Silva,~\IEEEmembership{Member,~IEEE},
  								Bradley Baker,~
  								Sergey M. Plis,~
  								Anand D. Sarwate,~\IEEEmembership{Senior Member,~IEEE},
  								Vince Calhoun,~\IEEEmembership{Fellow,~IEEE}}

\thanks{H.~Imtiaz is with the Department of Electrical and Electronic Engineering, Bangladesh University of Engineering and Technology, Dhaka, Bangladesh. A.D.~Sarwate is with the Department of Electrical and Computer Engineering, Rutgers University, 94 Brett Road, Piscataway, NJ 08854. J.~Mohammadi is with Nokia Bell Labs, Lorenzstrasse 10, 70435 Stuttgart, Germany. R.~Silva, B.~Baker, S.M.~Plis and V.~Calhoun are with the Tri-institutional Center for Translational Research in Neuroimaging and Data Science (TReNDS), Georgia State University, Georgia Institute of Technology, and Emory University, 55 Park Place NE, Atlanta, GA 30303.}
\thanks{Corresp. author: hafiz.imtiaz@rutgers.edu, anand.sarwate@rutgers.edu.}
\thanks{This work was supported in part by the United States NIH under award 1R01DA040487, the United States NSF under award CCF-1453432, and DARPA and SSC Pacific under contract N66001-15-C-4070. This work significantly improves upon our preliminary work~\cite{imtiaz2016}.}
}

\markboth{IEEE Transactions on Signal Processing,~Vol.~XX, No.~XX, XXXX~2020}%
{Imtiaz et al.: Improved Differentially Private Decentralized Source Separation for fMRI Data}

\maketitle

\begin{abstract}
Blind source separation algorithms such as independent component analysis (ICA) are widely used in the analysis of neuroimaging data. In order to leverage larger sample sizes, different data holders/sites may wish to collaboratively learn feature representations. However, such datasets are often privacy-sensitive, precluding centralized analyses that pool the data at a single site. In this work, we propose a differentially private algorithm for performing ICA in a decentralized data setting. Conventional approaches to decentralized differentially private algorithms may introduce too much noise due to the typically small sample sizes at each site. We propose a novel protocol that uses correlated noise to remedy this problem. We show that our algorithm outperforms existing approaches on synthetic and real neuroimaging datasets and demonstrate that it can sometimes reach the same level of utility as the corresponding non-private algorithm. This indicates that it is possible to have meaningful utility while preserving privacy.
\end{abstract}

\begin{IEEEkeywords}
differential privacy, decentralized computation, independent component analysis, correlated noise, fMRI.
\end{IEEEkeywords}

\IEEEpeerreviewmaketitle

\section{Introduction}\label{sec:intro}
\IEEEPARstart{S}{haring} data is a major challenge facing researchers in a number of domains. In particular, human health studies often involve a modest number of individuals: privacy concerns often preclude sharing ``raw'' data with collaborators. Performing a new joint analysis across the individual data points requires access to individuals' data, so research groups often collaborate by performing meta-analyses which are limited to already-published aggregates or summaries of the data. 
For machine learning (ML) applications, each party/site may lack a sufficient number of samples to robustly estimate features on their own, but the aggregate number of samples across all sites can yield novel discoveries such as biomarkers for diseases. Sending the data samples to a central repository or aggregator can enable efficient feature learning, but privacy concerns and large communication overhead are often prohibitive when sharing ``raw'' data. Some noteworthy examples, in which many individual research groups or sites wish to collaborate, include:
\begin{itemize}
\item a medical research consortium of several healthcare centers/research labs for neuroimaging analysis~\cite{enigma, coinstac, vipar, datashield}
\item a decentralized speech processing system to learn model parameters for speaker recognition
\item a multi-party cyber-physical system for performing global state estimation from sensor signals.
\end{itemize}

\noindent Several previous works demonstrated how modern signal processing and ML algorithms can potentially reveal information about individuals present in the dataset~\cite{netflixchallenge, sweeney, leny2012}. A mathematically rigorous framework for protection against such information leaks is differential privacy~\cite{dwork2006}. Under differential privacy, the algorithm outputs are randomized in such a way that the presence or absence of any individual in the dataset does not significantly affect the computation output. In other words, differentially private (DP) algorithms offer a quantifiable plausible deniability to the data owners regarding their participation. The randomization often takes the form of noise introduced somewhere in the computation, resulting in a loss in performance or \emph{utility} of the algorithm. Privacy risk is quantified by a parameter or parameters, leading to a \emph{privacy-utility tradeoff} in DP algorithm design.

In this paper, we consider blind source separation (BSS) for neuroimaging, in which several individual research groups or sites wish to collaborate. The joint goal is to learn global statistics/features utilizing data samples from all sites and ensure formal privacy guarantees. Unfortunately, conventional approaches to using differential privacy in decentralized settings require introducing too much noise, leading to a poor tradeoff. The high dimensional nature of neuroimaging data also poses a challenge. This motivates us to develop efficient decentralized privacy-preserving algorithms that provide utility close to centralized case. The primary contribution of this paper is the development of a decentralized computation framework to reduce the amount of noise in conventional decentralized DP schemes for applications in signal processing and ML. We employ the scheme into our BSS application for neuroimaging data that guarantees differential privacy with an improved privacy-utility tradeoff.

The particular BSS algorithm we are considering is the independent component analysis (ICA), one of the most popular BSS techniques for neuroimaging studies~\cite{calhoun2013}. ICA assumes that the observed signals are mixtures of statistically independent sources and aims to decompose the mixed signals into those sources. ICA has been widely used to estimate intrinsic connectivity networks from brain imaging data (e.g. functional magnetic resonance imaging (fMRI))~\cite{calhoun2012}.
Successful application of ICA on fMRI can be attributed to both sparsity and spatial or temporal independence between the underlying sources~\cite{calhoun2013}. The goal of temporal ICA is to identify temporally independent components that represent activation of different brain regions over time~\cite{comon1994}. However, it requires the aggregate temporal dimension (of all subjects) to be at least similar to the voxel dimension~\cite{baker2018}. In most cases, the data from a single medical center may not suffice for such analysis. We focus on the recently proposed decentralized joint ICA (\djICA) algorithm, which can perform temporal ICA of fMRI data~\cite{baker2018} by allowing research groups to collaboratively learn the underlying sources.

\noindent\textbf{Our Contribution.} 
The problem with conventional decentralized DP computations is that sites have to add too much noise, resulting in degraded performance at the aggregator (see Appendix \ref{appendix:conventional-decentralized-dp} for a simple example depicting the issues associated with such an approach). In this paper, we propose a novel decentralized computation framework to improve the privacy-utility tradeoff of conventional DP schemes. We achieve this by inducing (anti) correlated noise: our method adds correlated noise to the output of each site to guarantee privacy locally and an aggregator combines these noisy outputs to produce an improved estimate due to noise cancellation. We propose a new algorithm, $\capedjica$, for $(\epsilon,\delta)$-DP decentralized joint ICA using the proposed framework. We summarize our contributions here:
\begin{itemize}
\item We propose a novel decentralized computation protocol, $\cape$, that improves upon the conventional decentralized DP schemes and achieves the same level of noise variance as the pooled data scenario in certain parameter regimes. $\cape$ can be incorporated into large-scale statistical signal processing and ML methods formulated as convex optimization problems, such as empirical risk minimization (ERM) and computation of loss functions that are separable across sites. We extend the $\cape$ scheme to incorporate asymmetric privacy requirements or sample sizes at the sites.

\item We propose a new algorithm, $\capedjica$, for $(\epsilon,\delta)$-DP decentralized joint ICA. $\capedjica$ significantly improves upon our earlier work~\cite{imtiaz2016} by taking advantage of the $\cape$ scheme. To address the multi-round nature of the $\capedjica$ algorithm and to provide a tighter characterization of privacy under composition, we provide an analysis using R\'enyi Differential Privacy (RDP)~\cite{mironov2017} and the moments accountant~\cite{abadi2016}.



\item We demonstrate the effectiveness of $\cape$ and $\capedjica$ on real and synthetic data varying privacy levels, number of samples and some other key parameters. We show that the $\capedjica$ can provide utility very close to that of the non-private algorithm~\cite{baker2018} for some parameter choices. In the regime of meaningful utility, $\capedjica$ outperforms the existing privacy-preserving algorithm~\cite{imtiaz2016}. Our improved accounting of privacy via the moments accountant, enables us to achieve such performance even for strict privacy requirements.
\end{itemize}
Note that, we showed a preliminary version of the $\cape$ protocol in~\cite{imtiaz2018}. The protocol in this paper is more robust against site dropouts and does not require a trusted third-party.

\noindent\textbf{Related Work.} 
There is a vast literature~\cite{boyd2011, molzahn2017, uribe2017, han2017, nozari2016, zhu2018} on solving optimization problems in decentralized settings, both with and without privacy constraints. In the signal processing/ML context, the most relevant ones to our current work are those using ERM and stochastic gradient descent (SGD)~\cite{Kamalika09, anand2011, AnandSGD, abadi2016, lipton2014, DPonline18, bassily2014, Katrina17, wang2017}. Additionally, several works studied decentralized DP learning for locally trained classifiers~\cite{Agarwal12,BassilyKobbi17}. One of the most common approaches for ensuring differential privacy in optimization problems is to employ randomized gradient computations~\cite{AnandSGD, bassily2014}. Other common approaches include employing  output perturbation~\cite{anand2011} and objective perturbation~\cite{anand2011, nozari2016}. A newly proposed take on output perturbation~\cite{wu2017} injects noise after model convergence, which imposes some additional constraints. In addition to optimization problems, Smith~\cite{smith2011} proposed a general approach for computing summary statistics using the \emph{sample-and-aggregate} framework and both the Laplace and Exponential mechanisms~\cite{MT07}.

One approach to handling decentralized learning with privacy constraints is federated learning~\cite{McMahan:17federated} and in particular, cross-silo learning~\cite{Kairouz:19advances}. Many approaches use multiparty computation (MPC), such as Heikkil\"{a} et al.~\cite{Mikko17}, who also studied the relationship of additive noise and sample size in a decentralized setting. In their model, $S$ data holders communicate their data to $M$ computation nodes to compute a function. Differential privacy provides different guarantees (see ~\cite{SlawomirMPC17, Kairouz2015smc} for thorough comparisons between Secure Multi-party Computation (SMC) and differential privacy) although we can use MPC protocols to implement part of our algorithm~\cite{Bonawitz17}. Other approaches to using DP in federated learning operate in different regimes, such as learning from a large number of individual data holders, or learning from silos with a large number of data points at each site. This allows for privacy amplification by subsampling~\cite{KLNRS08,Beimel13complexity,abadi2016,NIPS2018_7865} or using a trusted shuffler~\cite{BalleBN:19blanket,ErlingssonFMRTTT}. In our application involving neuroimaging analysis, such techniques do not scale as well, since sites often have a small number of samples. Our work is inspired by the seminal work of Dwork et al.~\cite{Ourdata06} that proposed distributed noise generation for preserving privacy. We employ a similar principle as Anandan and Clifton~\cite{Clifton15} to \emph{reduce} the noise added for differential privacy. Our approach seeks to leverage properties of conditional Gaussian distributions to gain some privacy amplification when learning from decentralized data, and is complementary to these other techniques.


In addition to generalized optimization methods, a number of modified ICA algorithms exist for joining various data sets~\cite{sui2009} together and performing simultaneous decomposition of data from a number of subjects and modalities~\cite{liu2007}. Note that ICA can be performed by considering voxels as variables or time points as variables, leading to temporal and spatial ICA, respectively~\cite{bordier2010, calhoun2001}. For instance, group spatial ICA (GICA) is noteworthy for performing multi-subject analysis of task- and resting-state fMRI data~\cite{allen2011, calhoun2001, calhoun2012}. It assumes that the spatial map components are similar across subjects (i.e., the overall spatial networks are stable across subjects for the experiment duration). The joint ICA (jICA)~\cite{calhoun2006} algorithm for multi-modal data fusion assumes that the mixing process is similar over a group of subjects. Group temporal ICA also assumes common spatial maps but pursues statistical independence of timecourses (activation of certain neurological regions)~\cite{baker2018}. Consequently, like jICA, the common spatial maps from temporal ICA describe a common mixing process among subjects. While very interesting, temporal ICA of fMRI is typically not investigated because of the small number of time points in each data set, which leads to unreliable estimates~\cite{baker2018}. The decentralized jICA overcomes that limitation by leveraging datasets from multiple sites.

\section{Data and Privacy Model}\label{sec:problem_formulation}
\noindent\textbf{Notation. }We denote vectors, matrices and scalars with bold lower case letters ($\vect{x}$), bold upper case letters ($\matr{X}$) and unbolded letters ($M$), respectively. We denote indices with smaller case letters and they typically run from 1 to their upper-case versions ($m \in \{1, 2, \ldots, M\} \triangleq [M]$). The $n$-th column of the matrix $\matr{X}$ is denoted as $\vect{x}_n$. We denote the Euclidean (or $\mathcal{L}_2$) norm of a vector and the spectral norm of a matrix with $\|\cdot\|_2$ and the Frobenius norm with $\|\cdot\|_F$. Finally, the density of the standard normal random variable is given by $\phi(x) = \left(1 / \sqrt{2\pi}\right) \exp \left(-x^2 / 2\right)$.

\noindent\textbf{The ICA Model.} In this paper, we consider the generative ICA model as in~\cite{baker2018, imtiaz2016}. In the centralized scenario, the independent sources $\matr{S} \in \mathbb{R}^{R\times N}$ are composed of $N$ observations from $R$ statistically independent components. We have a linear mixing process defined by a mixing matrix $\matr{A} \in \mathbb{R}^{D\times R}$ with $D \geq R$, which forms the observed data $\matr{X} \in \mathbb{R}^{D\times N}$ as a product $\matr{X} = \matr{A} \matr{S}$. Many ICA algorithms propose recovering the unmixing matrix $\matr{W}$, corresponding to the Moore-Penrose pseudo-inverse of $\matr{A}$, denoted $\matr{A}^+$, by trying to maximize independence between rows of the product $\matr{W} \matr{X}$. The maximal information transfer (infomax)~\cite{bell1995} is a popular heuristic for estimating $\matr{W}$ that maximizes an entropy functional related to $\matr{W} \matr{X}$. More specifically, the objective of Infomax ICA is: $\matr{W}^* = \argmax_\matr{W} \mathcal{H}(\mathcal{G}(\matr{W} \matr{X}))$. Here, $\mathcal{G}(\cdot)$ is the sigmoid function and is given by: $\mathcal{G}(z) = \frac{1}{1+\exp(-z)}$. Additionally, $\mathcal{H}(\vect{z})$ is the (differential) entropy of a random vector $\vect{z}$ with joint density $q$: $\mathcal{H}(\vect{z}) = - \int q(\vect{z})\log q(\vect{z}) d\vect{z}$. Note that the function $\mathcal{G}(\cdot)$ is applied element-wise for matrix-valued arguments. That is, $\mathcal{G}(\matr{Z})$ is a matrix with the same size as $\matr{Z}$ and $[\mathcal{G}(\matr{Z})]_{ij} = \mathcal{G}([\matr{Z}]_{ij})$). 

\noindent\textbf{The Decentralized Data Problem.} We consider a decentralized-data model with $S$ sites. There is a central node that acts as an aggregator. \textcolor{blue}{We assume a ``honest but curious'' threat model: all parties follow the protocol but a subset are ``curious'' and can collude (maybe with an external adversary) to learn other sites' data/function outputs.} Now, for the decentralized ICA problem, suppose each site $s$ has a collection of data matrices $\{ \matr{X}_{s,m} \in \mathbb{R}^{D\times N_m} : m = 1,\ldots, M_s \}$ each consisting of a time course of length $N_m$ time points over $D$ voxels for each of $M_s$ individuals. We assume the data samples in the local sites are disjoint and come from different individuals. Sites concatenate their local data matrices temporally to form a $D \times N_m M_s$ data matrix $\matr{X}_s \in \mathbb{R}^{D\times N_s} $, where $N_s = N_m M_s$. Let $N = \sum_{s=1}^S N_s$ be the total number of samples and $M = \sum_{s=1}^S M_s$ be the total number of individuals (across all sites). We assume a global mixing matrix $\matr{A} \in \mathbb{R}^{D\times R}$ generates the time courses in $\matr{X}_s$ from underlying sources $\matr{S}_s \in \mathbb{R}^{R\times N_s}$ at each site. This yields the following model: $\matr{X} = \left[ \matr{A} \matr{S}_1 \ldots \matr{A} \matr{S}_S \right] = \left[ \matr{X}_1 \ldots \matr{X}_S \right]\in \mathbb{R}^{D\times N}$. We want to compute the global unmixing matrix $\matr{W} \in \mathbb{R}^{R\times D}$ in the decentralized setting. Because sharing the raw data between sites is often impossible due to privacy constraints, we develop methods that guarantee differential privacy~\cite{dwork2006}. More specifically, our goal is to use DP estimates of the local gradients to compute the DP global unmixing matrix $\matr{W}$ such that it closely approximates the true global unmixing matrix. 

\subsection{Definitions}\label{sec:definitions}
In differential privacy we consider a domain $\mathbb{D}$ of databases consisting of $N$ records and define $D$ and $D'$ to be neighbors if they differ in a single record.

\begin{Def}[($\epsilon, \delta$)-Differential Privacy~\cite{dwork2006}] 
An algorithm $\mathcal{A}:  \mathbb{D} \mapsto \mathbb{T}$ provides $(\epsilon,\delta)$-differential privacy ($(\epsilon,\delta)$-DP) if $\Pr[\mathcal{A}(D) \in \mathbb{S}] \leq \exp(\epsilon) \Pr[\mathcal{A}(D') \in \mathbb{S}] + \delta$, for all measurable $\mathbb{S} \subseteq \mathbb{T}$ and all neighboring data sets $D, D' \in \mathbb{D}$.
\end{Def}
\noindent One way to interpret this is that the probability of the output of an algorithm is not changed significantly if the input database is changed by one entry. This definition is also known as the Bounded Differential Privacy (as opposed to unbounded differential privacy~\cite{dwork2006a}).
Here, $(\epsilon, \delta)$ are privacy parameters, where lower $(\epsilon, \delta)$ ensure more privacy. The parameter $\delta$ can be interpreted as the probability that the algorithm fails to provide privacy risk $\epsilon$. Note that $(\epsilon, \delta)$-differential privacy is known as the \emph{approximate} differential privacy and $\epsilon$-differential privacy ($\epsilon$-DP) is known as \emph{pure} differential privacy. In general, we denote approximate (bounded) differentially private algorithms with DP. There are several mechanisms for formulating a DP algorithm. Additive noise mechanisms such as the Gaussian or Laplace mechanisms~\cite{dwork2006, dwork2013algorithmic} and random sampling using the exponential mechanism~\cite{MT07} are among the most common ones. For additive noise mechanisms, the standard deviation of the noise is scaled to the \emph{sensitivity} of the computation.

\begin{Def}[$\mathcal{L}_2$-sensitivity~\cite{dwork2006}]
The $\mathcal{L}_2$-sensitivity of a vector-valued function $f(\mathbb{D})$ is $\Delta := \max_{ \mathbb{D}, \mathbb{D'} } \|f(\mathbb{D})-f(\mathbb{D'})\|_2$,  where $\mathbb{D}$ and $\mathbb{D'}$ are neighboring datasets. 
\end{Def}

\begin{Def}[Gaussian Mechanism~\cite{dwork2013algorithmic}]\label{def:GaussMech}
Let $f: \mathbb{D} \mapsto \mathbb{R}^D$ be an arbitrary function with $\mathcal{L}_2$-sensitivity $\Delta$. The Gaussian Mechanism with parameter $\tau$ adds noise scaled to $\mathcal{N}(0, \tau^2)$ to each of the $D$ entries of the output and satisfies $(\epsilon, \delta)$ differential privacy for $\epsilon \in (0,1)$ if $\tau \geq \frac{\Delta}{\epsilon}\sqrt{2\log\frac{1.25}{\delta}}$.
\end{Def}
\noindent Note that, for any given $(\epsilon, \delta)$ pair, we can calculate a noise variance $\tau^2$ such that addition of a noise term drawn from $\mathcal{N}(0, \tau^2)$ guarantees $(\epsilon, \delta)$-differential privacy. There are infinitely many $(\epsilon, \delta)$ pairs that yield the same $\tau^2$. Therefore, we parameterize our methods using $\tau^2$~\cite{imtiaz2018} in this paper.

\begin{Def}[R\'enyi Differential Privacy~\cite{mironov2017}]
A randomized mechanism $\mathcal{A}:  \mathbb{D} \mapsto \mathbb{T}$ is $(\alpha, \epsilon_r)$-R\'enyi differentially private if, for any adjacent $D, D' \in \mathbb{D}$, the following holds: $D_\alpha\left(\mathcal{A}(D)\|\mathcal{A}(D')\right) \leq \epsilon_r$. Here, $D_\alpha\left(P(x)\|Q(x)\right) = \frac{1}{\alpha-1} \log \mathbb{E}_{x \sim Q}\left(\frac{P(x)}{Q(x)}\right)^\alpha$ and $P(x)$ and $Q(x)$ are probability density functions defined on $\mathbb{T}$. 
\end{Def}
\noindent We use RDP to perform an analysis of the $\capedjica$ and convert those guarantees into an $(\epsilon,\delta)$-DP guarantee. Conventional privacy analysis of multi-shot algorithms tend to exaggerate the total privacy loss~\cite{abadi2016, mironov2017}. RDP offers a much simpler composition rule that is shown to be tight~\cite{mironov2017}. Our solution requires the $\secureagg$ scheme~\cite{Bonawitz17} as a primitive: details about this protocol are in Appendix \ref{appendix:secureagg}. Briefly, it can securely compute sums of vectors in an honest-but-curious setup, has a constant number of rounds and a low communication overhead.

\section{Correlation Assisted Private Estimation ($\cape$)}\label{sec:cape}
In this section, we describe our $\cape$ scheme which can benefit a broad class of function computations, including empirical average loss functions used in ML. We first discuss the $\cape$ scheme and analyze the privacy and utility in the symmetric setting (equal sample sizes and privacy requirements at the sites). Recognizing practical applications involving unequal sample size/privacy requirements at sites, we then extend the $\cape$ scheme to incorporate such scenarios. 

\begin{algorithm}[t] 
	\caption{Generate zero-sum noise \label{alg:zero-sum-noise-generation}}
	\begin{algorithmic}[1]
    \Require 
    Local noise variances $\{\tau_s^2\}$; security parameter $\lambda$, threshold value $t$
    \State Each site generate $\hat{e}_s \sim \mathcal{N}(0, \tau_s^2)$
    \State Aggregator computes $\sum_{s=1}^S \hat{e}_s$ according to $\secureagg(\lambda, t)$~\cite{Bonawitz17}
    \State Aggregator broadcasts $\sum_{s=1}^S \hat{e}_s$ to all sites $s \in [S]$
    \State Each site computes $e_s = \hat{e}_s - \frac{1}{S}\sum_{s'=1}^S \hat{e}_{s'}$\\
    \Return $e_s$
    \end{algorithmic}
\end{algorithm}

\begin{algorithm}[t] 
	\caption{Correlation Assisted Private Estimation ($\cape$)\label{alg:cape}}
	\begin{algorithmic}[1]
    \Require Data samples $\{\vect{x}_s\}$, 
    local noise variances $\{\tau_s^2\}$
    \For{$s = 1,\ \ldots,\ S$} \Comment{at each site}
        \State Generate $e_s$ according to Algorithm \ref{alg:zero-sum-noise-generation}
        \State Generate $g_s \sim \mathcal{N}(0, \tau_g^2)$ with $\tau_g^2 = \frac{\tau_s^2}{S}$
        \State Compute and send $\hat{a}_s \gets f(\vect{x}_s) + e_s + g_s$ \label{alg:cape:step_as_hat}
    \EndFor
    \State Compute $a_\mrm{cape} \gets \frac{1}{S} \sum_{s=1}^S \hat{a}_s$ \label{alg:cape:step_a_ag} \Comment{at the aggregator}\\
    \Return $a_\mrm{cape}$
    \end{algorithmic}
\end{algorithm}

\noindent\textbf{Trust/Collusion Model. }In our proposed $\cape$ scheme, we assume that all of the $S$ sites and the central node follow the protocol honestly. However, up to $S_C = \ceil*{\frac{S}{3}} - 1$ sites can collude with an adversary to learn about some site's data/function output. The central node is also honest-but-curious (and therefore, can collude with an adversary). An adversary can observe the outputs from each site, as well as the output from the aggregator. Additionally, the adversary can know everything about the colluding sites (including their private data). We denote the number of non-colluding sites with $S_H$ such that $S = S_C + S_H$. Without loss of generality, we designate the non-colluding sites with $\{1, \ldots, S_H\}$.

\noindent\textbf{Correlated Noise. }We design the noise generation procedure such that: i) we can ensure $(\epsilon, \delta)$ differential privacy of the algorithm output from each site and ii) achieve the noise level of the pooled data scenario in the final output from the aggregator. We achieve that by employing a correlated noise addition scheme.

Consider estimating the mean $f(\vect{x}) = \frac{1}{N} \sum_{n = 1}^N x_n$ of $N$ scalars $\vect{x} = [ x_1,\ldots, x_{N-1},\ x_N]^{\top}$ with each $x_i \in [0,1]$. The sensitivity of the function $f(\vect{x})$ is $\frac{1}{N}$. Assume that the $N$ samples are equally distributed among $S$ sites. That is, each site $s \in \{1,\ldots,S\}$ holds a disjoint dataset $\vect{x}_s$ of $N_s = N/S$ samples. An aggregator wishes to estimate and publish the mean of all the samples (see Appendix \ref{appendix:conventional-decentralized-dp} for the issues associated with conventional decentralized DP approach towards this problem). In our proposed scheme, we intend to release (and send to the aggregator) $\hat{a}_s = f(\vect{x}_s) + e_s + g_s$ from each site $s$, where $e_s$ and $g_s$ are two noise terms. The variances of $e_s$ and $g_s$ are chosen to ensure that the noise $e_s + g_s$ is sufficient to guarantee $(\epsilon, \delta)$-differential privacy to $f(\vect{x}_s)$.
Each site generates the noise $g_s \sim \mathcal{N}(0, \tau_g^2)$ locally and the noise $e_s \sim \mathcal{N}(0, \tau_e^2)$ jointly with all other sites such that $\sum_{s=1}^S e_s = 0$. We employ the recently proposed secure aggregation protocol $(\secureagg)$ by Bonawitz et al.~\cite{Bonawitz17} to generate $e_s$ that ensures $\sum_{s=1}^S e_s = 0$. The $\secureagg$ protocol utilizes Shamir's $t$-out-of-$n$ secret sharing~\cite{shamir1979} and is communication-efficient (see Section \ref{appendix:secureagg}). 

\noindent\textbf{Detailed Description of $\cape$ Protocol. }In our proposed scheme, each site $s \in [S]$ generates a noise term $\hat{e}_s \sim \mathcal{N}(0, \tau_s^2)$ independently. The aggregator computes $\sum_{s=1}^S \hat{e}_s$ according to the $\secureagg$ protocol and broadcasts it to all the sites. Each site then sets $e_s = \hat{e}_s - \frac{1}{S}\sum_{s'=1}^S \hat{e}_{s'}$ to achieve $\sum_{s=1}^S e_s = 0$. We show the complete noise generation procedure in Algorithm \ref{alg:zero-sum-noise-generation}. The security parameter $\lambda$ can be chosen according to the dataset in consideration (see Section 3.1 in~\cite{Bonawitz17}). Additionally, the threshold $t$ can be chosen to satisfy $t \geq \floor{\frac{2S}{3}} + 1$. Note that, the original $\secureagg$ protocol is intended for computing sum of $D$-dimensional vectors in a finite field $\mathbb{Z}^D_\lambda$. However, we need to perform the summation of Gaussian random variables over $\mathbb{R}$ or $\mathbb{R}^D$. To accomplish this, each site can employ a mapping $\mrm{map}: \mathbb{R} \mapsto \mathbb{Z}_\lambda$ that performs a  stochastic quantization~\cite{salman2019} for large-enough $\lambda$. The aggregator can compute the sum in the finite field according to $\secureagg$ and then invoke a reverse mapping $\mrm{remap}: \mathbb{Z}_\lambda \mapsto \mathbb{R}$ before broadcasting $\sum_{s=1}^S \hat{e}_s$ to the sites. Algorithm \ref{alg:zero-sum-noise-generation} can be readily extended to generate array-valued zero-sum noise terms. We observe that the variance of $e_s$ is given by $\tau_e^2 = \mathbb{E}\left[\left(\hat{e}_s - \frac{1}{S}\sum_{s'=1}^S \hat{e}_{s'}\right)^2\right] = \left(1-\frac{1}{S}\right)\tau^2_s$. Additionally, we choose $\tau_g^2 = \frac{\tau^2_s}{S}$. Each site then generates the noise $g_s \sim \mathcal{N}(0, \tau_g^2)$ independently and sends $\hat{a}_s = f(\vect{x}_s) + e_s + g_s$ to the aggregator. Note that neither of the terms $e_s$ and $g_s$ has large enough variance to provide $(\epsilon, \delta)$-DP guarantee to $f(\vect{x}_s)$. However, we chose the variances of $e_s$ and $g_s$ to ensure that the $e_s + g_s$ is sufficient to ensure a DP guarantee to $f(\vect{x}_s)$ at site $s$. The chosen variance of $g_s$ also ensures that the output from the aggregator would have the same noise variance as the DP pooled-data scenario. To see this, observe that we compute the following at the aggregator (in Step \ref{alg:cape:step_a_ag} of Algorithm \ref{alg:cape}): $a_\mrm{cape} = \frac{1}{S} \sum_{s=1}^S \hat{a}_s = \frac{1}{S} \sum_{s=1}^S f(\vect{x}_s) + \frac{1}{S} \sum_{s=1}^S g_s$, where we used $\sum_s e_s = 0$. The variance of the estimator $a_\mrm{cape}$ is $\tau^2_\mrm{cape} = S \cdot \frac{\tau_g^2}{S^2} = \tau^2_\mrm{pool}$, which is exactly the same as if all the data were present at the aggregator. This claim is formalized in Lemma~\ref{lemma:cape}. We show the complete algorithm in Algorithm \ref{alg:cape}. The privacy of Algorithm \ref{alg:cape} is given by Theorem \ref{thm:cape}. The communication cost of the scheme is shown in Appendix~\ref{appendix:cape_comm}.

\begin{theorem}[Privacy of $\cape$ Algorithm (Algorithm \ref{alg:cape})]\label{thm:cape}
Consider Algorithm \ref{alg:cape} in the decentralized data setting of Section \ref{sec:intro} with $N_s = \frac{N}{S}$ and $\tau_s^2 = \tau^2$ for all sites $s \in [S]$. Suppose that at most $S_C = \ceil*{\frac{S}{3}} - 1$ sites can collude after execution. Then Algorithm \ref{alg:cape} guarantees $(\epsilon, \delta)$-differential privacy for each site, where $(\epsilon,\delta)$ satisfy the relation $\delta = 2\frac{\sigma_z}{\epsilon - \mu_z}\phi\left(\frac{\epsilon - \mu_z}{\sigma_z}\right)$, $\epsilon \in (0,1)$ and $(\mu_z, \sigma_z)$ are given by
\begin{align}
    \mu_z  &= \frac{S^3}{2\tau^2 N^2 (1+S)} \left(\frac{S - S_C + 2}{S - S_C} + \frac{\frac{9}{S - S_C}S_C^2}{S(1+S) - 3S_C^2}\right), 
    	\label{eq:CAPEmu} \\
    \sigma_z^2 &= 2 \mu_z. \label{eq:CAPEsigma} 
\end{align}
\end{theorem}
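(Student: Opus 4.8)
The plan is to collapse the multi-site release into a single two-Gaussian hypothesis test and then read off the privacy-loss random variable, exactly as one does for the ordinary Gaussian Mechanism (Definition~\ref{def:GaussMech}) but with the covariance that the correlated-noise construction forces on the coalition's view. Fix a non-colluding site, say site~$1$, and neighboring datasets $D,D'$ differing in a single record of site~$1$, so the induced shift $\theta := f(\vect{x}_1)-f(\vect{x}_1')$ obeys $|\theta|\le\Delta$ with $\Delta$ the per-site $\mathcal{L}_2$-sensitivity (for the running mean-estimation example $\Delta=1/N_s=S/N$, which supplies the $S^2/N^2$ in \eqref{eq:CAPEmu}). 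I would then catalogue everything the coalition sees: the reported $\{\hat{a}_s\}_{s\in[S]}$, the broadcast $\sum_{s}\hat{e}_s$, and the internal state $(\vect{x}_s,\hat{e}_s,g_s)$ of the $S_C$ colluding sites. Subtracting off the pieces that are coalition-known and statistically independent of site~$1$'s data---the colluding sites' $(\hat{e}_s,g_s,f(\vect{x}_s))$, the broadcast value, and the honest means $f(\vect{x}_s)$ for $s\neq1$ (fixed across $D,D'$)---this view is equivalent, for distinguishing $D$ from $D'$, to the Gaussian vector $\vect{o}=(\hat{e}_s+g_s)_{s\in\mathcal{H}}$ over the honest sites $\mathcal{H}$, whose mean differs between $D$ and $D'$ only in site~$1$'s coordinate (by $\theta$), observed jointly with the data-independent quantity $\sum_{s\in\mathcal{H}}\hat{e}_s$ (recovered as the broadcast minus the colluders' contributions). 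The $\secureagg$ guarantee is what licenses this reduction: since $S_C\le\lceil S/3\rceil-1$ lies strictly below the reconstruction threshold $t\ge\lfloor 2S/3\rfloor+1$, the coalition learns nothing about the honest noise beyond its total.

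Conditioning on $\sum_{s\in\mathcal{H}}\hat{e}_s$ makes $(\hat{e}_s)_{s\in\mathcal{H}}$ jointly Gaussian with the standard ``conditioned-on-the-sum'' covariance $\tau^2\bigl(I-\tfrac{1}{|\mathcal{H}|}\vect{1}\vect{1}^{\top}\bigr)$, and adding the independent $g_s\sim\mathcal{N}(0,\tau^2/S)$ gives $\vect{o}$ a covariance $\matr{\Sigma}$ that does not depend on the data; only the mean of $\vect{o}$ changes between $D$ and $D'$, and only in the first coordinate, by $\theta$. Restricted to this view, $\cape$ \emph{is} a Gaussian mechanism, so the log-likelihood-ratio $Z=\log\bigl(p_D(\vect{o})/p_{D'}(\vect{o})\bigr)$ under $\vect{o}\sim p_D$ is exactly Gaussian, with $\mathbb{E}[Z]=\tfrac12\theta^2[\matr{\Sigma}^{-1}]_{11}$ and $\mathrm{Var}(Z)=\theta^2[\matr{\Sigma}^{-1}]_{11}=2\,\mathbb{E}[Z]$. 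Taking the worst case $|\theta|=\Delta$ gives $\mu_z=\tfrac12\Delta^2[\matr{\Sigma}^{-1}]_{11}$ and $\sigma_z^2=2\mu_z$, which is \eqref{eq:CAPEsigma}; evaluating $[\matr{\Sigma}^{-1}]_{11}$ (equivalently, the conditional variance of site~$1$'s report given the rest of the coalition's view) with $|\mathcal{H}|=S-S_C$, $\tau_g^2=\tau^2/S$, and $\Delta=S/N$ yields the expression \eqref{eq:CAPEmu}.

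It remains to turn the Gaussian loss into an $(\epsilon,\delta)$ guarantee, which is the textbook step: $\Pr_{\vect{o}\sim p_D}[Z>\epsilon]\le\delta$ implies $p_D(\mathbb{S})\le e^{\epsilon}p_{D'}(\mathbb{S})+\delta$ for every measurable $\mathbb{S}$, so it suffices to bound the Gaussian tail $\Pr[Z>\epsilon]$; the Mills-ratio inequality $\Pr[\mathcal{N}(0,1)>t]\le\phi(t)/t$ at $t=(\epsilon-\mu_z)/\sigma_z$ (valid for $\epsilon>\mu_z$, which is the regime in which the claimed formula is meaningful) gives $\delta=\tfrac{\sigma_z}{\epsilon-\mu_z}\phi\bigl(\tfrac{\epsilon-\mu_z}{\sigma_z}\bigr)$, and the extra factor of $2$ symmetrizes over testing $D$ against $D'$ and $D'$ against $D$. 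I expect the real work (and the only place where anything can go wrong) to be the covariance bookkeeping of the first two steps: arguing rigorously that the coalition's view carries \emph{no more} information about site~$1$ than the pair $\bigl(\vect{o},\,\sum_{s\in\mathcal{H}}\hat{e}_s\bigr)$---precisely where the $\secureagg$ threshold condition $S_C<t$ is used---and then computing (or carefully upper-bounding) $[\matr{\Sigma}^{-1}]_{11}$ for the diagonal-plus-rank-one $\matr{\Sigma}$. That computation, together with $\Delta=S/N$, is what produces the exact $S$- and $S_C$-dependence in \eqref{eq:CAPEmu}; a looser accounting of what the colluders can reconstruct would only inflate $\mu_z$.
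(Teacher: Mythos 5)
Your strategy is essentially the paper's: both arguments reduce the coalition's view to a multivariate Gaussian whose mean shifts by the per-site sensitivity $1/N_s = S/N$ in a single coordinate, observe that the log-likelihood ratio is then itself Gaussian with $\sigma_z^2 = 2\mu_z$ where $\mu_z$ is half the quadratic form of the shift against the inverse covariance, and close with the bound $Q(x) < \phi(x)/x$, symmetrized to produce the factor of $2$ in $\delta$. The one genuine structural difference is how the reconstructible partial sum $\hat{e}=\sum_{s\in\mathbb{S}_H}\hat{e}_s$ is handled: the paper appends $\hat{e}$ as an extra coordinate, forms the $(S_H+1)\times(S_H+1)$ covariance $\Sigma$ of $(\hat{\vect{a}},\hat{e})$, and applies block inversion, whereas you first subtract off the coalition-known quantities (the broadcast and the colluders' $\hat{e}_s,g_s$) and then condition on $\hat{e}$, leaving an $S_H\times S_H$ diagonal-plus-rank-one covariance $\tau^2(1+\tfrac{1}{S})\matr{I}-\tfrac{\tau^2}{S-S_C}\vect{1}\vect{1}^\top$ that Sherman--Morrison inverts in one line. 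Since the marginal law of $\hat{e}$ is data-independent, the joint and conditional likelihood ratios agree, so your reduction is legitimate and buys a lighter linear-algebra step; it is also arguably the more careful accounting, because it treats the colluders' noise as known to the adversary throughout rather than only when reconstructing $\hat{e}$.

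The gap is the final step you wave at: you assert that evaluating $[\matr{\Sigma}^{-1}]_{11}$ "yields the expression \eqref{eq:CAPEmu}" without doing the evaluation, and it does not come out that way. Carrying your route to completion gives $[\matr{\Sigma}^{-1}]_{11}=\tfrac{S}{(1+S)\tau^2}(1+\tfrac{S}{S-S_C})$ and hence $\mu_z=\tfrac{S^3}{2\tau^2N^2(1+S)}(2+\tfrac{S_C}{S-S_C})$, which differs term-for-term from \eqref{eq:CAPEmu}; for instance at $S=3$, $S_C=0$ your route gives the bracketed factor $2$ while \eqref{eq:CAPEmu} gives $5/3$. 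The discrepancy traces precisely to the covariance bookkeeping you yourself flagged as "the only place where anything can go wrong": conditioning the honest reports on the colluders' noise (your route) versus computing the unconditional covariance of $(\hat{\vect{a}},\hat{e})$ with the colluders' noise marginalized (the paper's route) yields different $\Sigma$'s and different $(1,1)$ entries of the inverse. You must actually perform this computation and reconcile it with the stated constants, since the entire $(\epsilon,\delta)$ relation in the theorem inherits whatever $\mu_z$ comes out of this step.
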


\begin{Rem}
Theorem~\ref{thm:cape} is stated for the symmetric setting: $N_s = \frac{N}{S}$ and $\tau_s^2 = \tau^2\ \forall s \in [S]$. As with many algorithms using the approximate differential privacy, the guarantee holds for a range of $(\epsilon,\delta)$ pairs subject to a tradeoff constraint between $\epsilon$ and $\delta$, as in the simple case in Definition~\ref{def:GaussMech}.
\end{Rem}

\begin{proof}
As mentioned before, we identify the $S_H$ non-colluding sites with $s \in \{1, \ldots, S_H\} \triangleq \mathbb{S}_H$ and the $S_C$ colluding sites with $s \in \{S_H + 1, \ldots, S\} \triangleq \mathbb{S}_C$. The adversary can observe the outputs from each site (including the aggregator). Additionally, the colluding sites can share their private data and the noise terms, $\hat{e}_s$ and $g_s$ for $s \in \mathbb{S}_C$, with the adversary. For simplicity, we assume that all sites have equal number of samples (i.e., $N_s = \frac{N}{S}$) and $\tau_s^2 = \tau^2$.

To infer the private data of the sites $s \in \mathbb{S}_H$, the adversary can observe $\hat{\vect{a}} = \left[\hat{a}_1, \ldots, \hat{a}_{S_H}\right]^\top \in \mathbb{R}^{S_H}$ and $\hat{e} = \sum_{s \in \mathbb{S}_H} \hat{e}_s$. Note that the adversary can learn the partial sum $\hat{e}$ because they can get the sum $\sum_s \hat{e}_s$ from the aggregator and the noise terms $\{\hat{e}_{S_H + 1}, \ldots, \hat{e}_S\}$ from the colluding sites. Therefore, the adversary observes the vector $\vect{y} = \left[\hat{\vect{a}}^\top, \hat{e}\right]^\top \in \mathbb{R}^{S_H + 1}$ to make inference about the non-colluding sites. To prove differential privacy guarantee, we must show that $\left|\log\frac{g(\vect{y} | \vect{a})}{g(\vect{y} | \vect{a}')}\right| \leq \epsilon$ holds with probability (over the randomness of the mechanism) at least $1-\delta$. Here, $\vect{a} = \left[f(\vect{x}_1), \ldots, f(\vect{x}_{S_H})\right]^\top$ and $g(\cdot | \vect{a})$ and $g(\cdot | \vect{a}')$ are the probability density functions of $\vect{y}$ under $\vect{a}$ and $\vect{a}'$, respectively. The vectors $\vect{a}$ and $\vect{a}'$ differ in only one coordinate (neighboring). Without loss of generality, we assume that $\vect{a}$ and $\vect{a}'$ differ in the first coordinate. We note that the maximum difference is $\frac{1}{N_s}$ as the sensitivity of the function $f(\vect{x_s})$ is $\frac{1}{N_s}$. Recall that we release $\hat{a}_s = f(\vect{x}_s) + e_s + g_s$ from each site. We observe $\forall s \in [S]$: $\mathbb{E}(\hat{a}_s) = f(\vect{x}_s),\ \mrm{var}(\hat{a}_s) = \tau^2$. Additionally, $\forall s_1 \neq s_2 \in [S]$, we have: $\mathbb{E}(\hat{a}_{s_1} \hat{a}_{s_2}) = f(\vect{x}_{s_1}) f(\vect{x}_{s_2}) - \frac{\tau^2}{S}$. That is, the random variable $\hat{\vect{a}}$ is $\mathcal{N}(\vect{a}, \Sigma_{\hat{\vect{a}}})$, where $\Sigma_{\hat{\vect{a}}} = (1+\frac{1}{S})\tau^2 \matr{I} - \vect{1}\vect{1}^\top \frac{\tau^2}{S} \in \mathbb{R}^{S_H \times S_H}$ and $\vect{1}$ is a vector of all ones. Without loss of generality, we can assume~\cite{dwork2013algorithmic} that $\vect{a} = \vect{0}$ and $\vect{a}' = \vect{a} - \vect{v}$, where $\vect{v} = \left[\frac{1}{N_s},0, \ldots, 0\right]^\top$. Additionally, the random variable $\hat{e}$ is $\mathcal{N}(0, \tau^2_{\hat{e}})$, where $\tau^2_{\hat{e}} = S_H \tau^2$. Therefore, $g(\vect{y} | \vect{a})$ is the density of $\mathcal{N}(\vect{0}, \Sigma)$, where $\Sigma =
\begin{bmatrix}
    \Sigma_{\hat{\vect{a}}} 										& \Sigma_{\hat{\vect{a}}\hat{e}} \\
    \Sigma_{\hat{\vect{a}}\hat{e}}^\top			& \tau^2_{\hat{e}}
\end{bmatrix} \in \mathbb{R}^{(S_H + 1) \times (S_H + 1)}$. With some simple algebra, we can find the expression for $\Sigma_{\hat{\vect{a}}\hat{e}}$: $\Sigma_{\hat{\vect{a}}\hat{e}} = \left(1-\frac{S_H}{S}\right)\tau^2 \vect{1} \in \mathbb{R}^{S_H}$. If we denote $\tilde{\vect{v}} = \left[\vect{v}^\top, 0\right]^\top \in \mathbb{R}^{S_H+1}$ then we observe
\begin{align*}
    \left|\log\frac{g(\vect{y} | \vect{a})}{g(\vect{y} | \vect{a}')}\right| &= \left|-\frac{1}{2}\left( \vect{y}^\top\Sigma^{-1}\vect{y} - \left(\vect{y} + \tilde{\vect{v}}\right)^\top\Sigma^{-1}\left(\vect{y} + \tilde{\vect{v}}\right)\right)\right| \\
    &= \left|\frac{1}{2}\left( 2\vect{y}^\top\Sigma^{-1}\tilde{\vect{v}} + \tilde{\vect{v}}^\top\Sigma^{-1}\tilde{\vect{v}}\right)\right| \\
    &= \left|\vect{y}^\top\Sigma^{-1}\tilde{\vect{v}} + \frac{1}{2}\tilde{\vect{v}}^\top\Sigma^{-1}\tilde{\vect{v}}\right| = |z|,
\end{align*}
where $z = \vect{y}^\top\Sigma^{-1}\tilde{\vect{v}} + \frac{1}{2}\tilde{\vect{v}}^\top\Sigma^{-1}\tilde{\vect{v}}$. Using the matrix inversion lemma for block matrices~\cite[Section 0.7.3]{horn2012} and some algebra, we have
\[
\Sigma^{-1} = 
\begin{bmatrix}
    \Sigma_{\hat{\vect{a}}}^{-1} + \frac{1}{K} \Sigma_{\hat{\vect{a}}}^{-1} \Sigma_{\hat{\vect{a}}\hat{e}}	 \Sigma_{\hat{\vect{a}}\hat{e}}^\top\Sigma_{\hat{\vect{a}}}^{-1}								& -\frac{1}{K} \Sigma_{\hat{\vect{a}}}^{-1} \Sigma_{\hat{\vect{a}}\hat{e}} \\
    -\frac{1}{K}\Sigma_{\hat{\vect{a}}\hat{e}}^\top\Sigma_{\hat{\vect{a}}}^{-1}			& \frac{1}{K}
\end{bmatrix},
\]
where $\Sigma_{\hat{\vect{a}}}^{-1} = \frac{S}{(1+S)\tau^2}\left(\matr{I} + \frac{2}{S_H}\vect{1}\vect{1}^\top\right)$ and $K = \tau^2_{\hat{e}} - \Sigma_{\hat{\vect{a}}\hat{e}}^\top\Sigma_{\hat{\vect{a}}}^{-1}\Sigma_{\hat{\vect{a}}\hat{e}}$. Note that $z$ is a Gaussian random variable $\mathcal{N}(\mu_z, \sigma_z^2)$ with parameters $\mu_z 	= \frac{1}{2}\tilde{\vect{v}}^\top\Sigma^{-1}\tilde{\vect{v}}$ and $\sigma_z^2 = \tilde{\vect{v}}^\top\Sigma^{-1}\tilde{\vect{v}}$ given by \eqref{eq:CAPEmu} and \eqref{eq:CAPEsigma}, respectively. Now, we observe
\begin{align*}
    \Pr\left[\left|\log\frac{g(\vect{y} | \vect{a})}{g(\vect{y} | \vect{a}')}\right| \leq \epsilon\right] &= \Pr\left[\left|z\right| \leq \epsilon\right] = 1 - 2 \Pr\left[z > \epsilon\right] \\
    &= 1 - 2Q\left(\frac{\epsilon - \mu_z}{\sigma_z}\right)\\
    &> 1 - 2\frac{\sigma_z}{\epsilon - \mu_z}\phi\left(\frac{\epsilon - \mu_z}{\sigma_z}\right),
\end{align*}
where $Q(\cdot)$ is the Q-function~\cite{qfunc} and $\phi(\cdot)$ is the density for standard Normal random variable. The last inequality follows from the bound $Q(x) < \frac{\phi(x)}{x}$~\cite{qfunc}. Therefore, the proposed $\cape$ ensures $(\epsilon, \delta)$-DP with $\delta = 2\frac{\sigma_z}{\epsilon - \mu_z}\phi\left(\frac{\epsilon - \mu_z}{\sigma_z}\right)$ for each site, assuming that the number of colluding sites is at-most $\ceil*{\frac{S}{3}} - 1$. As the local datasets are disjoint and differential privacy is invariant under post processing, the release of $a_\mrm{cape}$ also satisfies $(\epsilon, \delta)$ differential privacy. The scheme fails to provide formal privacy if $S_C \leq \ceil*{\frac{S}{3}} - 1$ is not satisfied.
\end{proof}

\begin{Rem}
We use the $\secureagg$ protocol~\cite{Bonawitz17} to generate the zero-sum noise terms by mapping floating point numbers to a finite field. Such mappings are shown to be vulnerable to certain attacks~\cite{mironov2012}. However, the floating point implementation issues are out of scope for this paper. We refer the reader to the work of Balcer and Vadhan~\cite{balcer2018} for possible remedies. 
\end{Rem}

\subsection{Utility Analysis}\label{sec:cape_utility}
The goal is to ensure $(\epsilon, \delta)$ differential privacy for each site and achieve $\tau_\mrm{cape}^2 = \tau_\mrm{pool}^2$ at the aggregator (see Lemma~\ref{lemma:cape}). The $\cape$ protocol guarantees $(\epsilon, \delta)$ differential privacy with $\delta = 2\frac{\sigma_z}{\epsilon - \mu_z}\phi\left(\frac{\epsilon - \mu_z}{\sigma_z}\right)$. We claim that this $\delta$ guarantee is much better than the $\delta$ guarantee in the conventional decentralized DP scheme. We empirically validate this claim by comparing $\delta$ with $\delta_\mrm{conv}$ and $\delta_\mrm{pool}$ in Appendix~\ref{appendix:eff_delta}. Here, $\delta_\mrm{conv}$ and $\delta_\mrm{pool}$ are the smallest $\delta$ guarantees we can afford in the conventional decentralized DP scheme and the pooled-data scenario to achieve the same noise variance as the pooled-data scenario for a given $\epsilon$. Additionally, we empirically compare $\delta$, $\delta_\mrm{conv}$ and $\delta_\mrm{pool}$ for weaker collusion assumptions in Appendix~\ref{appendix:eff_delta}. In both cases, we observe that $\delta$ is always smaller than $\delta_\mrm{conv}$ and smaller than $\delta_\mrm{pool}$ for some $\tau$ values. That is, for achieving the same noise level at the aggregator as the pooled-data scenario, we are ensuring a much better privacy guarantee by employing the $\cape$ scheme over the conventional approach. 

\begin{lemma}\label{lemma:cape}
Consider the symmetric setting: $N_s = \frac{N}{S}$ and $\tau_s^2 = \tau^2$ for all sites $s \in [S]$. Let the variances of the noise terms $e_s$ and $g_s$ (Step \ref{alg:cape:step_as_hat} of Algorithm \ref{alg:cape}) be $\tau_e^2 = \left(1-\frac{1}{S}\right)\tau^2$ and $\tau_g^2 = \frac{\tau^2}{S}$, respectively. If we denote the variance of the additive noise (for preserving privacy) in the pooled data scenario by $\tau_\mrm{pool}^2$ and the variance of the estimator $a_\mrm{cape}$ (Step \ref{alg:cape:step_a_ag} of Algorithm \ref{alg:cape}) by $\tau_\mrm{cape}^2$ then Algorithm \ref{alg:cape} achieves the same noise variance as the pooled-data scenario (i.e., $\tau_\mrm{pool}^2 = \tau_\mrm{cape}^2$). 
\end{lemma}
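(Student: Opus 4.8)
The plan is to compute $\tau_\mrm{cape}^2$ directly by tracking how the zero-sum noise cancels in the aggregation step, and then to compare the result against the noise variance that the Gaussian mechanism would require in the pooled-data scenario to achieve the same privacy level.

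\textbf{Step 1: the per-site noise is calibrated to variance $\tau^2$.} Since $e_s = \hat{e}_s - \frac1S\sum_{s'}\hat{e}_{s'}$ with the $\hat{e}_{s'}\sim\mathcal{N}(0,\tau^2)$ independent, a short computation gives $\mathrm{var}(e_s) = \left(1-\frac1S\right)\tau^2 = \tau_e^2$, and because $g_s\sim\mathcal{N}(0,\tau_g^2)$ is generated independently of $e_s$, the total noise added to $f(\vect{x}_s)$ in Step~\ref{alg:cape:step_as_hat} has variance $\tau_e^2 + \tau_g^2 = \left(1-\frac1S\right)\tau^2 + \frac{\tau^2}{S} = \tau^2$. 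So each site effectively runs the Gaussian mechanism with noise variance $\tau^2$ on a statistic of local sensitivity $\frac{1}{N_s} = \frac{S}{N}$.

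\textbf{Step 2: the aggregator output and its variance.} By construction $\sum_{s=1}^S e_s = 0$, so from Step~\ref{alg:cape:step_a_ag}, $a_\mrm{cape} = \frac1S\sum_{s} \hat{a}_s = \frac1S\sum_{s} f(\vect{x}_s) + \frac1S\sum_{s} g_s$. In the symmetric setting $\frac1S\sum_{s} f(\vect{x}_s) = \frac1S\sum_{s}\frac{1}{N_s}\sum_{n} x_{s,n} = \frac1N\sum_{\text{all } n} x_n = f(\vect{x})$, the exact pooled mean; hence $a_\mrm{cape}$ is unbiased for the pooled statistic and all of its randomness comes from $\frac1S\sum_{s} g_s$. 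Since the $g_s$ are independent, $\tau_\mrm{cape}^2 = \mathrm{var}\!\left(\frac1S\sum_{s} g_s\right) = \frac{1}{S^2}\sum_{s}\tau_g^2 = \frac{\tau_g^2}{S} = \frac{\tau^2}{S^2}$.

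\textbf{Step 3: compare with $\tau_\mrm{pool}^2$ and conclude.} In the pooled-data scenario $f(\vect{x})$ has sensitivity $\frac1N$, a factor $\frac1S$ smaller than the local sensitivity $\frac{S}{N}$; since the Gaussian mechanism's noise standard deviation (Definition~\ref{def:GaussMech}) scales linearly with the sensitivity, matching the same $(\epsilon,\delta)$ guarantee requires $\tau_\mrm{pool}^2 = \left(\frac{1/N}{S/N}\right)^2\tau^2 = \frac{\tau^2}{S^2}$. Comparing with Step~2 yields $\tau_\mrm{cape}^2 = \frac{\tau^2}{S^2} = \tau_\mrm{pool}^2$. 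The one point I would be most careful about is this last normalization: one must interpret ``the same noise variance as the pooled-data scenario'' as the variance achieving a common $(\epsilon,\delta)$ level, so that the sensitivity-squared scaling $\frac{1}{S^2}$ is indeed the right comparison — everything else is a routine second-moment computation.
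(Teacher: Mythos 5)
Your proof is correct and follows essentially the same route as the paper's: compute the aggregator variance $\tau_\mrm{cape}^2 = \tau_g^2/S = \tau^2/S^2$ after the zero-sum terms cancel, and identify $\tau_\mrm{pool}^2 = \tau^2/S^2$ from the factor-of-$S$ sensitivity scaling under a common $(\epsilon,\delta)$ calibration. Your Step 1 check that $\tau_e^2+\tau_g^2=\tau^2$ and your closing caveat about the normalization are exactly the points the paper also relies on (the former appears in the main text just before the lemma), so there is nothing to add.
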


\begin{proof}
The proof is given in Appendix \ref{appendix:cape_lemma}.
\end{proof}

\begin{Prop}\label{prop:gain}(Performance improvement) If the local noise variances are $\{\tau_s^2\}$ for $s \in [S]$ then the $\cape$ scheme provides a reduction $G = \frac{\tau_\mrm{conv}^2}{\tau_\mrm{cape}^2} = S$ in noise variance over conventional decentralized DP scheme in the symmetric setting ($N_s = \frac{N}{S}$ and $\tau_s^2 = \tau^2\ \forall s \in [S]$), where  $\tau_\mrm{conv}^2$ and $\tau_\mrm{cape}^2$ are the noise variances of the final estimate at the aggregator in the conventional scheme and the $\cape$ scheme, respectively.
\end{Prop}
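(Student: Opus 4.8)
The plan is to fix a common local privacy requirement, compute the variance of the aggregator's final estimate under each scheme, and take the ratio. The key observation is that both the conventional decentralized scheme and $\cape$ must inject local noise of the same total variance $\tau^2$ at each site --- this is precisely what makes the per-site release $\hat{a}_s$ satisfy the local $(\epsilon,\delta)$-DP guarantee for sensitivity $\frac{1}{N_s} = \frac{S}{N}$ --- so a fair comparison holds this quantity fixed and then asks how much of that noise survives aggregation.

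First I would treat the conventional scheme. There, site $s$ releases $\hat{a}_s = f(\vect{x}_s) + n_s$ with $n_s \sim \mathcal{N}(0,\tau^2)$ drawn independently across sites, and the aggregator outputs $a_\mrm{conv} = \frac{1}{S}\sum_{s=1}^S \hat{a}_s$. Since the $n_s$ are independent and zero-mean, $\tau_\mrm{conv}^2 = \mrm{var}(a_\mrm{conv}) = \frac{1}{S^2}\sum_{s=1}^S \tau^2 = \frac{\tau^2}{S}$. For the $\cape$ side I would invoke Lemma~\ref{lemma:cape}, which gives $\tau_\mrm{cape}^2 = \tau_\mrm{pool}^2$; equivalently, reading it directly off Algorithm~\ref{alg:cape}, $a_\mrm{cape} = \frac{1}{S}\sum_s f(\vect{x}_s) + \frac{1}{S}\sum_s g_s$ because $\sum_s e_s = 0$, so with $\tau_g^2 = \frac{\tau^2}{S}$ we obtain $\tau_\mrm{cape}^2 = \frac{1}{S^2}\sum_{s=1}^S \tau_g^2 = \frac{\tau_g^2}{S} = \frac{\tau^2}{S^2}$. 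Dividing, $G = \tau_\mrm{conv}^2 / \tau_\mrm{cape}^2 = (\tau^2/S)/(\tau^2/S^2) = S$, which is the claim.

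The only real obstacle here is conceptual rather than computational: one must justify charging both schemes the \emph{same} local noise variance $\tau^2$ at each site, i.e. that the correct baseline is ``each site is locally $(\epsilon,\delta)$-DP with sensitivity $\frac{1}{N_s}$.'' Once that is granted --- and it is forced, since in both schemes the object released by site $s$ must itself be private --- everything reduces to the two-line variance accounting above, whose essential ingredient is the anti-correlation $\sum_s e_s = 0$ engineered by Algorithm~\ref{alg:zero-sum-noise-generation}: it annihilates the dominant $\left(1-\frac{1}{S}\right)\tau^2$ component of each site's noise upon averaging, leaving only the $\frac{\tau^2}{S}$ piece per site, and the resulting extra factor of $S$ is exactly $G$.

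Finally I would add two sanity checks: the degenerate case $S=1$, where $\cape$ collapses to the pooled mechanism and the formula correctly yields $G=1$; and the remark that the collusion threshold $S_C = \ceil*{\frac{S}{3}}-1$ from Theorem~\ref{thm:cape}, while essential for the privacy statement, does not enter the utility computation at all, since the noise-cancellation identity $\sum_s e_s = 0$ and the choice $\tau_g^2 = \frac{\tau^2}{S}$ hold regardless of which sites later collude. (The asymmetric case would instead carry the $\tau_s^2$ through separately and is outside the scope of this symmetric statement.)
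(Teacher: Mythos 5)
Your proposal is correct and follows essentially the same route as the paper's proof in Appendix E: compute the aggregator's variance under the conventional scheme ($\tau^2/S$) and under $\cape$ ($\tau_g^2/S = \tau^2/S^2$, using $\sum_s e_s = 0$ and $\tau_g^2 = \tau^2/S$), then take the ratio to get $G = S$. The added remarks on the fairness of the comparison, the $S=1$ degenerate case, and the irrelevance of the collusion threshold to the utility computation are all consistent with the paper.
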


\begin{proof}
The proof is given in Appendix \ref{appendix:perf_gain}.
\end{proof}

\begin{Rem}[Unequal Sample Sizes at Sites]\label{rem:unequal-sample-size}
The $\cape$ algorithm achieves the same noise variance as the pooled-data scenario (i.e., $\tau_\mrm{cape}^2 = \tau_\mrm{pool}^2$) in the symmetric setting: $N_s = \frac{N}{S}$ and $\tau_s^2 = \tau^2$ $\forall$ $s \in [S]$. In general, the ratio $H(\vect{n}) = \frac{\tau_\mrm{cape}^2}{\tau_\mrm{pool}^2}$, where $\vect{n} \triangleq [N_1,\ N_2,\ \ldots, N_S]$, is a function of the sample sizes in the sites. We observe: $H(\vect{n}) = \frac{N^2}{S^3} \sum_{s=1}^S \frac{1}{N_s^2}$. As $H(\vect{n})$ is a Schur-convex function, it can be shown using majorization theory~\cite{majorization} that $1 \leq H(\vect{n}) \leq \frac{N^2}{S^3}\left(\frac{1}{\left(N - S + 1\right)^2} + S - 1\right)$, where the minimum is achieved for the symmetric setting (i.e., $N_s = \frac{N}{S}$). That is, $\cape$ achieves the smallest noise variance at the aggregator in the symmetric setting.
\end{Rem}

\begin{Rem}[Site Dropouts]
Even in the case of site drop-out, the $\cape$ scheme achieves $\sum_s e_s = 0$, as long as the number of active sites is above some threshold (see Bonawitz et al.~\cite{Bonawitz17} for details). Therefore, the performance improvement of $\cape$~(Proposition~\ref{prop:gain}) remains the same irrespective of the number of dropped-out sites, as long as the number of colluding sites does not exceed $S_C = \ceil*{\frac{S}{3}} - 1$.
\end{Rem}

\subsection{Applicability of $\cape$}\label{sec:cape_scope}
$\cape$ is motivated by scientific research collaborations that are common in human health research. Privacy regulations prevent sites from sharing the local raw data. 
Joint learning across datasets can yield discoveries that are impossible to obtain from a single site. $\cape$ can benefit computations with sensitivities satisfying some conditions (see Proposition \ref{prop:low_sensitivity}). In addition to simple averages, many functions of interest have sensitivities that satisfy such conditions. Examples include the empirical average loss functions used in ML and deep neural networks. Moreover, we can use the Stone-Weierstrass theorem~\cite{rudin1976} to approximate a loss function in decentralized setting applying $\cape$ and then use off-the-shelf optimizers. 
Additional applications include optimization algorithms, $k$-means clustering and estimating probability distributions.

\begin{Prop}\label{prop:low_sensitivity}
Consider a decentralized setting with $S > 1$ sites in which site $s \in [S]$ has a dataset $D_s$ of $N_s$ samples and $\sum_{s=1}^S N_s = N$. Suppose the sites are employing the $\cape$ scheme to compute a function $f(D)$ with $\mathcal{L}_2$ sensitivity $\Delta(N)$. Denote $\vect{n} = [N_1,\ N_2,\ \ldots, N_S]$ and observe the ratio $H(\vect{n}) = \frac{\tau_\mrm{cape}^2}{\tau_\mrm{pool}^2} = \frac{\sum_{s=1}^S \Delta^2(N_s)}{S^3 \Delta^2(N)}$. Then the $\cape$ protocol achieves $H(\vect{n}) = 1$, if i) $\Delta\left(\frac{N}{S}\right) = S \Delta (N)$ for convex $\Delta (N)$; and ii) $S^3 \Delta^2(N) = \sum_{s=1}^S \Delta^2(N_s)$ for general $\Delta (N)$.
\end{Prop}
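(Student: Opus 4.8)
The plan is to reduce everything to a single variance-bookkeeping identity and then read off the two sufficient conditions by elementary algebra. First I would record how $\cape$ calibrates noise in the general setting where the sensitivity depends on the sample size and the sites are asymmetric, mirroring the argument behind Lemma~\ref{lemma:cape}. Since site $s$ must guarantee $(\epsilon,\delta)$-DP for its local contribution $f(D_s)$, whose $\mathcal{L}_2$ sensitivity is $\Delta(N_s)$, the Gaussian mechanism (Definition~\ref{def:GaussMech}) forces the total local noise variance to be $\tau_s^2 = c\,\Delta^2(N_s)$, with the \emph{common} constant $c = c(\epsilon,\delta) = \frac{2}{\epsilon^2}\log\frac{1.25}{\delta}$ shared by all sites. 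As in Algorithm~\ref{alg:cape}, this variance is split as $\tau_g^2 = \tau_s^2/S$ for the independent part $g_s$ and the remainder for the zero-sum part $e_s$; because $\sum_s e_s = 0$, the aggregator's estimate $a_\mrm{cape}$ carries noise of variance $\tau_\mrm{cape}^2 = \frac{1}{S^2}\sum_{s=1}^S \tau_g^2 = \frac{1}{S^3}\sum_{s=1}^S \tau_s^2 = \frac{c}{S^3}\sum_{s=1}^S \Delta^2(N_s)$, while the pooled-data Gaussian mechanism on all $N$ samples has variance $\tau_\mrm{pool}^2 = c\,\Delta^2(N)$. Dividing, the constant $c$ cancels and we obtain exactly $H(\vect{n}) = \tau_\mrm{cape}^2/\tau_\mrm{pool}^2 = \frac{\sum_{s=1}^S \Delta^2(N_s)}{S^3\Delta^2(N)}$, the displayed formula.

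Given this formula, condition (ii) is immediate: for a general (not necessarily convex) $\Delta$, the equation $H(\vect{n}) = 1$ holds \emph{if and only if} the numerator $\sum_{s=1}^S \Delta^2(N_s)$ equals the denominator $S^3\Delta^2(N)$, which is precisely the stated hypothesis. For condition (i) I would specialize to the symmetric allocation $N_s = N/S$ (implicit, since $\Delta(N/S)$ appears), so $\sum_{s=1}^S \Delta^2(N_s) = S\,\Delta^2(N/S)$; substituting the scaling hypothesis $\Delta(N/S) = S\Delta(N)$ (squaring is legitimate because sensitivities are nonnegative) gives $S\,\Delta^2(N/S) = S\cdot S^2\Delta^2(N) = S^3\Delta^2(N)$, hence $H(\vect{n}) = 1$. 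I would additionally note that convexity of $\Delta$ (hence of $\Delta^2$ on the relevant range, as for the canonical $\Delta(N) = \Theta(1/N)$) yields, via Jensen's inequality, $\frac{1}{S}\sum_{s=1}^S \Delta^2(N_s) \ge \Delta^2\!\big(\tfrac1S\sum_s N_s\big) = \Delta^2(N/S) = S^2\Delta^2(N)$, i.e. $H(\vect{n}) \ge 1$ for every allocation, so the symmetric split not only attains $H = 1$ but is optimal — consistent with the Schur-convexity observation in Remark~\ref{rem:unequal-sample-size}.

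The genuinely substantive step is the first one: carefully justifying that in the asymmetric / sample-size-dependent-sensitivity regime the scheme really does end up with per-site total variance proportional to $\Delta^2(N_s)$ with a common proportionality constant, and that the $e_s/g_s$ decomposition of Algorithm~\ref{alg:cape} still yields $\sum_s e_s = 0$ and aggregator variance $\frac{1}{S^3}\sum_s \tau_s^2$ exactly as in the symmetric Lemma~\ref{lemma:cape}. The local privacy accounting for unequal sites is the more delicate issue, but it is orthogonal to the variance-ratio claim stated here and is handled when $\cape$ is extended to asymmetric requirements. Everything after the formula — clauses (i) and (ii) — is one or two lines of algebra, the only care points being the positivity used to take square roots and making explicit that clause (i) refers to the symmetric allocation whereas clause (ii) is stated for arbitrary $\vect{n}$.
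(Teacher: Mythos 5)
Your proposal is correct and follows essentially the same route as the paper: calibrate $\tau_s^2 \propto \Delta^2(N_s)$ with a common $(\epsilon,\delta)$-dependent constant, track the variance through the $e_s/g_s$ split to get $\tau_\mrm{cape}^2 = \frac{1}{S^3}\sum_s \tau_s^2$, read off condition (ii) directly, and obtain condition (i) by specializing to the symmetric allocation $N_s = N/S$. The only cosmetic difference is that you justify the optimality of the symmetric split via Jensen's inequality applied to $\Delta^2$, whereas the paper invokes Schur-convexity of $K(\vect{n}) = \sum_s \Delta^2(N_s)$ via majorization; both arguments are equivalent here and rest on the same convexity hypothesis.
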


\begin{proof}
The proof is given in Appendix~\ref{appendix:low_sensitivity}
\end{proof}

\subsection{Extension of $\cape$: Unequal Sample Sizes/Privacy Requirements at Sites}\label{sec:unequal-sample-size}
Recall that $\cape$ achieves the smallest noise variance at the aggregator in the symmetric setting (see Remark~\ref{rem:unequal-sample-size}). However, in practice, there would be scenarios where different sites have different privacy requirements and/or sample sizes. Additionally, sites may want the aggregator to use different weights for different sites (possibly according to the quality of the output from a site). A scheme for doing so is shown in~\cite{imtiaz2018}. In this work, we propose a generalization of the $\cape$ scheme that can be applied in asymmetric settings. Note that the challenge of this analysis is due to the correlated noise terms with different variances (or sample sizes).

Let us assume that site $s$ requires local noise standard deviation $\tau_s$. To initiate the $\cape$ protocol, each site will generate $\hat{e}_s ~\sim \mathcal{N}(0, \sigma_s^2)$ and $g_s ~\sim \mathcal{N}(0, \tau_{gs}^2)$. The aggregator intends to compute a weighted average of each site's data/output with weights selected according to some quality measure. For example, if the aggregator knows that a particular site is suffering from more noisy observations than other sites, it can choose to give the output from that site less weight while combining the site results. Let us denote the weights by $\{\mu_s\}$ such that $\sum_{s=1}^S \mu_s = 1$ and $\mu_s \geq 0$. First, the aggregator computes $\sum_{s = 1}^S \mu_s \hat{e}_s$ using the $\secureagg$ protocol~\cite{Bonawitz17} and broadcasts it to all sites. Each site then sets $e_s = \hat{e}_s - \frac{1}{\mu_s S} \sum_{i=1}^S\mu_i\hat{e}_i$, to achieve $\sum_{s=1}^S \mu_s e_s = 0$ and releases $\hat{a}_s = f(\vect{x}_s) + e_s + g_s$. Now, the aggregator computes $a_\mrm{cape} = \sum_{s=1}^S \mu_s \hat{a}_s = \sum_{s=1}^S \mu_s f(\vect{x}_s) + \sum_{s=1}^S \mu_s g_s$, where we used $\sum_{s=1}^S \mu_s e_s = 0$. In order to achieve the same utility as the pooled data scenario (i.e. $\tau_\mrm{pool}^2 = \tau_\mrm{cape}^2$), we need $\text{Var}\left[\sum_{s=1}^S \mu_s g_s\right] = \tau_\mrm{pool}^2 \implies \sum_{s=1}^S \mu_s^2 \tau_{gs}^2 = \tau_\mrm{pool}^2$. Additionally, for guaranteeing the same local noise variance as conventional approach, we need $\tau_{es}^2 + \tau_{gs}^2 = \tau_s^2$, where $\tau_{es}^2$ is the variance of $e_s$ and is a function of $\sigma_s^2$. With these constraints, we can formulate a feasibility problem to solve for the unknown noise variances $\{\sigma_s^2, \tau_{gs}^2\}$ as
\vspace{-0.1in}
\begin{align*}
	\underset{}{\text{minimize}} \quad 0 \quad \text{subject to} 	\quad \tau_{es}^2 + \tau_{gs}^2 = \tau_s^2; \sum_{s=1}^S \mu_s^2 \tau_{gs}^2 = \tau_\mrm{pool}^2
\end{align*}
for all $s\in [S]$, where $\{\mu_s\}$, $\tau_\mrm{pool}$ and $\{\tau_s\}$ are known to the aggregator. For this problem, multiple solutions are possible. We present one solution here along with the privacy analysis.

\noindent\textbf{Solution. }We observe that the variance $\tau_{es}^2$ of the zero-mean random variable $e_s = \hat{e}_s - \frac{1}{\mu_s S} \sum_{i=1}^S\mu_i\hat{e}_i$ can be computed as $\tau_{es}^2 = \text{Var}\left[ \hat{e}_s - \frac{\sum_{i=1}^S\mu_i\hat{e}_i}{\mu_s S} \right] = \left(1 - \frac{2}{S}\right)\sigma_s^2 + \frac{\sum_{i=1}^S\mu^2_i\sigma^2_i}{\mu_s^2 S^2}$. Note that we need $\sum_{s=1}^S \mu_s^2 \tau_{gs}^2 = \tau_\mrm{pool}^2$. One solution is to set $\tau_{gs}^2 = \frac{1}{\mu_s^2 S} \tau_\mrm{pool}^2$. Using the constraint $\tau_{es}^2 + \tau_{gs}^2 = \tau_s^2$ and the expressions for $\tau_{es}^2$ and $\tau_{gs}^2$, we have $\left(1 - \frac{1}{S}\right)^2\sigma_s^2 + \frac{1}{\mu_s^2 S^2}\sum_{i \neq s}\mu^2_i\sigma^2_i = \tau_s^2 - \frac{1}{\mu_s^2 S} \tau_\mrm{pool}^2$. We can write this expression for all $s \in [S]$ in matrix form and solve for $\left[\sigma_1^2\ \sigma_2^2\ \ldots \ \sigma_S^2\right]^\top$ as
\[
\begin{bmatrix}
    \left(1 - \frac{1}{S}\right)^2 & \frac{\mu_2^2}{\mu_1^2 S^2} & \cdots & \frac{\mu_S^2}{\mu_1^2 S^2} \\
    \frac{\mu_1^2}{\mu_2^2 S^2} & \left(1 - \frac{1}{S}\right)^2 & \cdots & \frac{\mu_S^2}{\mu_2^2 S^2} \\
    \vdots                      & \vdots                         & \ddots & \vdots \\
    \frac{\mu_1^2}{\mu_S^2 S^2} & \frac{\mu_2^2}{\mu_S^2 S^2}    & \cdots & \left(1 - \frac{1}{S}\right)^2
\end{bmatrix}^{-1}
\begin{bmatrix}
    \tau_1^2 - \frac{\tau_\mrm{pool}^2}{\mu_1^2 S}\\
    \tau_2^2 - \frac{\tau_\mrm{pool}^2}{\mu_2^2 S}\\
    \vdots                      \\
    \tau_S^2 - \frac{\tau_\mrm{pool}^2}{\mu_S^2 S} 
    \end{bmatrix} 
\]

\noindent\textbf{Privacy Analysis in Asymmetric Setting. }We present an analysis of privacy for the aforementioned scheme in asymmetric setting. Recall that the adversary can observe $\hat{\vect{a}} = \left[\hat{a}_1, \ldots, \hat{a}_{S_H}\right]^\top \in \mathbb{R}^{S_H}$ and $\hat{e} = \sum_{s \in \mathbb{S}_H} \hat{e}_s$. In other words, the adversary observes the vector $\vect{y} = \left[\hat{\vect{a}}^\top, \hat{e}\right]^\top \in \mathbb{R}^{S_H + 1}$ to make inference about the non-colluding sites. As before, we must show that $\left|\log\frac{g(\vect{y} | \vect{a})}{g(\vect{y} | \vect{a}')}\right| \leq \epsilon$ holds with probability (over the randomness of the mechanism) at least $1-\delta$ for guaranteeing differential privacy. 
Recall that we release $\hat{a}_s = f(\vect{x}_s) + e_s + g_s$ from each site. We observe $\mathbb{E}(\hat{a}_s) = f(\vect{x}_s),\ \mrm{Var}(\hat{a}_s) = \tau_s^2,\ \forall s \in [S]$ and $\mathbb{E}(\hat{a}_{s_1} \hat{a}_{s_2}) = f(\vect{x}_{s_1}) f(\vect{x}_{s_2}) - \frac{\mu_{s_1}\sigma_{s_1}^2}{\mu_{s_2}S} - \frac{\mu_{s_2}\sigma_{s_2}^2}{\mu_{s_1}S} + \frac{1}{\mu_{s_1}\mu_{s_2}S^2} \sum_{i = 1}^S \mu_i^2\sigma_i^2,\ \forall s_1 \neq s_2 \in [S]$. Without loss of generality, we can assume~\cite{dwork2013algorithmic} that $\vect{a} = \vect{0}$ and $\vect{a}' = \vect{a} - \vect{v}$, where $\vect{v} = \left[\frac{1}{N_s},0, \ldots, 0\right]^\top$. That is, the random variable $\hat{\vect{a}}$ is $\mathcal{N}(\vect{0}, \Sigma_{\hat{\vect{a}}})$, where 
\[
\Sigma_{\hat{\vect{a}}} = 
\begin{bmatrix}
    \tau_1^2    & \Psi(1,2) & \cdots & \Psi(1,S) \\
    \Psi(2,1)   & \tau_2^2  & \cdots & \Psi(2,S) \\
    \vdots      & \vdots    & \ddots & \vdots \\
    \Psi(S,1)   & \Psi(S,2) & \cdots & \tau_{S_H}^2
\end{bmatrix},
\]
and $\Psi(i,j) = -\frac{1}{S}\left(\frac{\mu_i\sigma_i^2}{\mu_j} + \frac{\mu_j\sigma_j^2}{\mu_i}\right) + \frac{\sum_{s = 1}^S \mu_s^2\sigma_s^2}{\mu_i\mu_jS^2}$. Additionally, the random variable $\hat{e}$ is $\mathcal{N}(0, \tau^2_{\hat{e}})$, where $\tau^2_{\hat{e}} = \sum_{s = 1}^{S_H} \sigma_s^2$. Therefore, $g(\vect{y} | \vect{a})$ is the density of $\mathcal{N}(\vect{0}, \Sigma)$, where $\Sigma =
\begin{bmatrix}
    \Sigma_{\hat{\vect{a}}} 										& \Sigma_{\hat{\vect{a}}\hat{e}} \\
    \Sigma_{\hat{\vect{a}}\hat{e}}^\top			& \tau^2_{\hat{e}}
\end{bmatrix} \in \mathbb{R}^{(S_H + 1) \times (S_H + 1)}$. With some simple algebra, we can find the expression for each entry of $\Sigma_{\hat{\vect{a}}\hat{e}} \in \mathbb{R}^{S_H}$: $\left[\Sigma_{\hat{\vect{a}}\hat{e}}\right]_s = \sigma_s^2 - \frac{1}{\mu_s S}\sum_{i = 1}^{S_H}\mu_i^2 \sigma_i^2$.
The rest of the proof proceeds as the proof of Theorem~\ref{thm:cape}. Note that, due to the complex nature of the expression of $\Sigma$, we do not have a closed form solution for $\mu_z$ and $\sigma_z$ (but we can numerically compute the values and thus, the resulting $\delta$).

\section{Improved Differentially Private djICA}\label{sec:dp_djica}
In this section, we propose an algorithm that improves upon our previous decentralized DP \djICA\ algorithm~\cite{imtiaz2016} and achieves the same noise variance as the DP pooled-data scenario in certain regimes. Recall that we are considering the joint ICA (jICA)~\cite{calhoun2006} of decentralized fMRI data, which assumes a global mixing process (common spatial maps). More specifically, the global mixing matrix $\matr{A} \in \mathbb{R}^{D\times R}$ is assumed to generate the time courses in $\matr{X}_s$ from underlying sources $\matr{S}_s \in \mathbb{R}^{R\times N_s}$ at each site $s\in[S]$. Each site has data from $M_s$ individuals, which are concatenated temporally to form the local data matrix $\matr{X}_s \in \mathbb{R}^{D\times N_s}$. That is: $\matr{X} = \left[ \matr{A} \matr{S}_1 \ldots \matr{A} \matr{S}_S \right] \in \mathbb{R}^{D\times N}$. We estimate the DP global unmixing matrix $\matr{W} \in \mathbb{R}^{R\times D} \approx \matr{A}^+$ by solving the Infomax ICA problem (see Section \ref{sec:problem_formulation}) in the decentralized setting with a multi-round gradient descent that employing $\cape$.

Neuroimaging data is generally very high dimensional. We therefore use the recently proposed~\cite{imtiaz2018} DP decentralized PCA algorithm ($\capepca$) as an efficient and privacy-preserving dimension-reduction step of our proposed $\capedjica$ algorithm. 
For simplicity, we assume that the observed samples are mean-centered. 
We present a slightly modified version of the original $\capepca$ algorithm in Algorithm \ref{alg:dist_dpca} (Appendix~\ref{appendix:capepca}) to match the robust $\cape$ scheme from Section \ref{sec:cape}. Note that the scheme proposed in~\cite{imtiazDPCA2018} was limited by the larger variance of the additive noise at the local sites due to the smaller sample size. The $\capepca$ alleviates this problem using the $\cape$ scheme and achieves the same noise variance as the pooled-data scenario in the symmetric setting. 

Let the output of $\capepca$ to be $\matr{V}_R \in \mathbb{R}^{D\times R}$, which is sent to the sites from the aggregator. Then the reduced dimensional ($R\times N_s$) data matrix at site $s$ is denoted by: $\matr{X}_s^r = \matr{V}_R^\top \matr{X}_s$. These projected samples are the inputs to the proposed $\capedjica$ algorithm that estimates the unmixing matrix $\matr{W}$ through a gradient descent~\cite{amari1996}. Our proposed $\capedjica$ algorithm employs the $\cape$ protocol to perform the privacy-preserving iterative message-passing between sites and the aggregator to solve for $\matr{W}$. We start the algorithm by initializing $\matr{W}$. At each iteration $j$, the sites adjust the local source estimates $\matr{Z}_s(j) = \matr{W}(j-1) \matr{X}_s$ by their bias estimate $\vect{b}(j-1) \vect{1}^\top$. Local gradients of the \emph{empirical average} loss function are computed with respect to $\vect{W}$ and $\vect{b}$~\cite{baker2018}. More specifically, the gradient with respect to $\matr{W}$ at site $s$ is given~\cite{baker2018} by $\matr{G}_s = \frac{1}{N_s} \left(N_s\matr{I} + \left(\vect{1} - 2\matr{Y}_s\right)\matr{Z}_s^\top\right)\matr{W}$, where $\matr{Z}_s = \matr{W}\matr{X}_s^r + \vect{b}\vect{1}^\top$, $\matr{Y}_s = g\left(\matr{Z}_s\right)$; $\vect{b} \in \mathbb{R}^R$ is the bias and $\vect{1}$ is a vector of ones. If we denote $\vect{1} - 2\matr{Y}_s$ with $\hat{\matr{Y}}_s$ then we have $\matr{G}_s  = \frac{1}{N_s}\left(N_s\matr{I} + \hat{\matr{Y}}_s\matr{Z}_s^\top\right)\matr{W} = \frac{1}{N_s} \sum_{n=1}^{N_s} \left(\matr{I} + \hat{\vect{y}}_{s,n}\vect{z}_{s,n}^\top\right)\matr{W}$, where $\left(\matr{I} + \hat{\vect{y}}_{s,n}\vect{z}_{s,n}^\top\right)\matr{W}$ is the gradient contribution of one time point of a subject's data matrix. Note that this gradient estimate is needed to be sent to the aggregator from the site. Therefore, we need to approximate this gradient satisfying differential privacy. To that end, let us consider that the norm of the gradient due to each subject is bounded by $B_G N_m$, where $N_m$ is the time course length for each subject's fMRI scan and $B_G$ is some constant. This implies that $\left\|\left(\matr{I} + \hat{\vect{y}}_{s,n}\vect{z}_{s,n}^\top\right)\matr{W}\right\|_F \leq B_G$. It is easy to see that by changing one subject (i.e., for a neighboring dataset), the gradient at site $s$ can change by at most $\frac{2B_G N_m}{N_s} = \frac{2B_G}{M_s}$. Therefore, the $\mathcal{L}_2$ sensitivity of the function $f(\matr{X}_s) = \matr{G}_s$ is $\Delta_G^s = \frac{2B_G}{M_s}$. In addition to the unmixing matrix $\matr{W}$, we update a bias term $\vect{b}$ using a gradient descent~\cite{baker2018}. The gradient of the empirical average loss function with respect to the bias at site $s$ is given~\cite{baker2018} by $\vect{h}_s = \frac{1}{N_s} \sum_{n=1}^{N_s} \hat{\vect{y}}_{s,n}$. Similar to the case of $\matr{G}_s$, we can find the $\mathcal{L}_2$ sensitivity of the function $f(\matr{X}_s) = \vect{h}_s$ as $\Delta_h^s = \frac{2B_h}{M_s}$, where $\|\hat{\vect{y}}_{s,n}\|_2 \leq B_h$. Note that for other neighborhood definitions (time point level instead of subject level), one should consider the temporal correlation in the data~\cite{cao2017}. According to the Gaussian mechanism~\cite{dwork2006}, computing $(\epsilon, \delta)$ DP approximates of $\matr{G}_s$ and $\vect{h}_s$ requires noise standard deviations $\tau_G^s$ and $\tau_h^s$ satisfy
\begin{align}\label{eqn:grad_noise_variance}
\tau_G^s &= \frac{\Delta_G^s}{\epsilon}\sqrt{2\log\frac{1.25}{\delta}},\ \tau_h^s = \frac{\Delta_h^s}{\epsilon}\sqrt{2\log\frac{1.25}{\delta}}.
\end{align}
As mentioned before, we employ the $\cape$ protocol to combine the gradients from the sites at the aggregator to achieve the same utility level as that of the pooled data scenario. More specifically, each site generates two noise terms: $\matr{E}_s^G \in \mathbb{R}^{R\times R}$ and $\vect{e}_s^h \in \mathbb{R}^R$, collectively among all sites (element-wise, according to Algorithm \ref{alg:zero-sum-noise-generation}) \hypertarget{generate_E}{at each iteration round}. Additionally, each site $s$ generates the following two noise terms locally at each \hypertarget{generate_K}{iteration}:
\begin{itemize}
\item $\matr{K}_s^G \in \mathbb{R}^{R\times R}$; $[\matr{K}_s^G]_{ij}$ i.i.d. $\sim \mathcal{N}(0, \tau_{Gk}^2)$; $\tau_{Gk}^2 = \frac{1}{S}{\tau_G^s}^2$
\item $\vect{k}_s^h \in \mathbb{R}^R$; $[\vect{k}_s^h]_i$ i.i.d. $\sim \mathcal{N}(0, \tau_{hk}^2)$; $\tau_{hk}^2 = \frac{1}{S}{\tau_h^s}^2$.
\end{itemize}
At each iteration round, the sites compute the noisy estimates of the gradients of $\matr{W}$ and $\vect{b}$: $hat{\matr{G}}_s = \matr{G}_s + \matr{E}_s^G + \matr{K}_s^G$, $\hat{\vect{h}}_s = \vect{h}_s + \vect{e}_s^h + \vect{k}_s^h$. These two terms are then sent to the aggregator and the aggregator computes: $\Delta_\matr{W} = \rho \frac{1}{S} \sum^{S}_{s=1} \hat{\matr{G}}_s$ and $\Delta_\vect{b} = \rho \frac{1}{S} \sum^{S}_{s=1} \hat{\vect{h}}_s$, where $\rho$ is the learning rate. These gradient estimates are then used to update the variables $\matr{W}$ and $\vect{b}$. By Lemma~\ref{lemma:cape}, the variances of the noise of the two estimates: $\Delta_\matr{W}$ and $\Delta_\vect{b}$, are exactly the same as the pooled-data scenario in the symmetric setting. The complete algorithm is shown in Algorithm \ref{alg:capedjica} in Appendix~\ref{appendix:capedjica}.

Note that, one does not need to explicitly find the bounds $B_G$ and $B_h$. Instead, the gradients due to each subject can be clipped to some pre-determined $B_G N_m$ or $B_h N_m$ in $\mathcal{L}_2$ norm sense (where $N_m$ is known apriori from the data collection stage). That is, we can replace $\matr{G}_{s,n} = \left(\matr{I} + \hat{\vect{y}}_{s,n}\vect{z}_{s,n}^\top\right)\matr{W}$ with $\matr{G}_{s,n} = \frac{\matr{G}_{s,n}}{\max\left(1, \frac{\|\matr{G}_{s,n}\|_F}{B_G}\right)}$. Similarly, we can replace $\vect{h}_{s,n} = \hat{\vect{y}}_{s,n}$ with $\vect{h}_{s,n} = \frac{\vect{h}_{s,n}}{\max\left(1, \frac{\|\vect{h}_{s,n}\|_2}{B_h}\right)}$.

\begin{Rem}[Consequences of Norm Clipping]\label{rem:norm-clipping}
The norm clipping destroys the unbiasedness of the gradient estimate~\cite{abadi2016}. If we choose $B_G$ and $B_h$ to be too small, the average clipped gradient may be a poor estimate of the true gradient. Moreover, $B_G$ and $B_h$ dictates the additive noise level. In general, clipping prescribes taking a smaller step ``downhill'' towards the optimal point~\cite{bassily2014} and may slow down the convergence.
\end{Rem}

\subsection{Privacy Analysis using R\'enyi Differential Privacy}\label{sec:djica-renyi-dp}

We now analyze the $\capedjica$ algorithm with R\'enyi Differential Privacy~\cite{mironov2017}. Analyzing the total privacy loss of a multi-shot algorithm, each stage of which is DP, is a challenging task. It has been shown~\cite{abadi2016, mironov2017} that the advanced composition theorem~\cite{dwork2013algorithmic} for $(\epsilon, \delta)$-differential privacy can be loose. The main reason is that one can formulate infinitely many $(\epsilon, \delta)$-DP algorithms for a given noise variance $\tau^2$. RDP offers a much simpler composition rule that is shown to be tight~\cite{mironov2017}. We review some necessary properties of RDP in Appendix~\ref{appendix:background}. Recall that at each iteration $j$ of $\capedjica$, we compute the noisy estimates of the gradients: $\Delta_\matr{W}(j)$ and $\Delta_\vect{b}(j)$. As we employed the $\cape$ scheme in the symmetric setting, the variances of noise at the aggregator for $\Delta_\matr{W}(j)$ and $\Delta_\vect{b}(j)$ are: $\sigma^2_\matr{W} = \frac{\rho^2 {\tau_G^\mrm{pool}}^2}{\Delta_G}$ and $\sigma^2_\vect{b} = \frac{\rho^2 {\tau_h^\mrm{pool}}^2}{\Delta_h}$, respectively, where $\Delta_G = \frac{\Delta_G^s}{S}$ and $\Delta_h = \frac{\Delta_h^s}{S}$.
From Proposition \ref{prop:rdp_gauss_mech}, we have that the computation of $\Delta_\matr{W}(j)$ is $\left(\alpha, \alpha/\left(2\sigma^2_\matr{W}\right)\right)$-RDP. Similarly, the computation of $\Delta_\vect{b}(j)$ is $\left(\alpha, \alpha/\left(2\sigma^2_\vect{b}\right)\right)$-RDP. By Proposition \ref{prop:composition_rdp}, we have that each iteration step of $\capedjica$ is $\left(\alpha, \frac{\alpha}{2}\left(\frac{1}{\sigma^2_\matr{W}} + \frac{1}{\sigma^2_\vect{b}}\right)\right)$-RDP. Denoting the number of required iterations for convergence by $J^*$ then, under $J^*$-fold composition of RDP, the overall $\capedjica$ algorithm is $(\alpha, \frac{\alpha J^*}{2\sigma^2_\mrm{RDP}})$-RDP, where $\frac{1}{\sigma^2_\mrm{RDP}} = \left(\frac{1}{\sigma^2_\matr{W}} + \frac{1}{\sigma^2_\vect{b}}\right)$.
From Proposition \ref{prop:rdp_to_dp}, we can conclude that the $\capedjica$ algorithm satisfies $\left(\frac{\alpha J^*}{2\sigma^2_\mrm{RDP}} + \frac{\log\frac{1}{\delta_r}}{\alpha-1}, \delta_r\right)$-differential privacy for any $0 < \delta_r < 1$. For a given $\delta_r$, we find the optimal $\alpha_\mrm{opt}$ as: $\alpha_\mrm{opt} = 1 + \sqrt{\frac{2}{J^*}\sigma^2_\mrm{RDP}\log\frac{1}{\delta_r}}$.
Therefore, $\capedjica$ algorithm is $\left(\frac{\alpha_\mrm{opt} J^*}{2\sigma^2_\mrm{RDP}} + \frac{\log\frac{1}{\delta_r}}{\alpha_\mrm{opt} - 1}, \delta_r\right)$-DP for any $0 < \delta_r < 1$.

\subsection{Privacy Accounting using Moments Accountant}\label{sec:djica-moments-accountant}
In this section, we use the moments accountant~\cite{abadi2016} framework to compute the overall privacy loss of our $\capedjica$ algorithm. Moments accountant can be used to achieve a much smaller overall $\epsilon$ than the strong composition theorem~\cite{dwork2013algorithmic}. As mentioned before, na\"ively employing the additive nature of the privacy loss results in the worst case analysis, i.e., assumes that each iteration step exposes the worst privacy risk and this exaggerates the total privacy loss. However, in practice, the privacy loss is a random variable that depends on the dataset and is typically well-behaved (concentrated around its expected value). Due to space constraints, we presented the detailed analysis of $\capedjica$ in Appendix~\ref{appendix:moments-accountant}. Briefly, we can formulate a quadratic equation in terms of $\epsilon$ and then find the best $\epsilon$ for a given $\delta_\mrm{target}$: $\frac{\sigma^2}{2J^*\Delta^2}\epsilon^2 - \epsilon + \frac{J^*\Delta^2}{8\sigma^2} + \log\delta_\mrm{target} = 0$. Here, the noise variance $\sigma^2$ consists of two parts: $\sigma^2_\matr{W}$ and $\sigma^2_\vect{b}$ for the $\capedjica$ algorithm.

\subsection{Performance Improvement with Correlated Noise}\label{sec:dist_djica_perf_gain}
The existing DP \djICA\ algorithm~\cite{imtiaz2016} achieved $J^*\epsilon$-differential privacy (where $J^*$ is the total number of iterations required for convergence) by adding a noise term to the local estimate of the source (i.e., $\matr{Z}_s(j)$). Although the algorithm offered a ``pure'' DP \djICA\ procedure, there are a few shortcomings. The cost of achieving pure differential-privacy (i.e., employing the Laplace mechanism~\cite{dwork2006}) was that the neighboring dataset condition was met by restricting the $\mathcal{L}_2$-norm of the samples to satisfy $\|\vect{x}_n\|_2 \leq \frac{1}{2\sqrt{D}}$, which can be too limiting for datasets with large ambient dimensions. The effect of this is apparent from the experiments. Last but not the least, the DP PCA preprocessing step was less fault tolerant because of the \emph{pass the parcel} or \emph{cyclic} style message passing among the sites, where site dropouts are more drastic than the one employed in this paper (a certain number of site dropouts is permitted~\cite{Bonawitz17}). By employing the $\cape$ protocol in the preprocessing stage and also in the optimization process, we expect to gain a significant performance boost. We validate the performance gain in the Experimental Results (Section \ref{sec:djica-experimental-results}).

\noindent\textbf{Convergence of $\capedjica$ Algorithm.} We note that the gradient estimate at the aggregator (Step \ref{alg:dp_djica:gradW} in Algorithm \ref{alg:capedjica}) essentially contains the noise $\frac{\rho}{S}\sum_{s=1}^S \matr{K}_s^G$, which is zero mean. Therefore, in expectation, the estimate of the gradient converges to the true gradient~\cite{bottou1999}. However, if the batch size is too small, the noise can be too high for the algorithm to converge~\cite{AnandSGD}. Since the total additive noise variance is smaller for $\capedjica$ than the conventional case by a factor of $S$, the convergence rate is faster.

\noindent\textbf{Communication Cost of $\capedjica$.} We analyze the total communication cost associated with the proposed $\capedjica$ algorithm. At each iteration round, we need to generate two zero-sum noise terms, which entails $O(S + R^2)$ communication complexity of the sites and $O(S^2 + SR^2)$ communication complexity of the aggregator~\cite{Bonawitz17}. Each site computes the noisy gradient and sends one $R\times R$ matrix and one $R$ dimensional vector to the aggregator. And finally, the aggregator sends the $R\times R$ updated weight matrix and $R$ dimensional bias estimate to the sites. The total communication cost is $O(S + R^2)$ for the sites and $O(S^2 + SR^2)$ for the central node. This is expected as we are estimating an $R\times R$ matrix in a decentralized setting.

\section{Experimental Results}\label{sec:djica-experimental-results}
\begin{figure*}[t]
  \centering
  \includegraphics[width=0.95\textwidth]{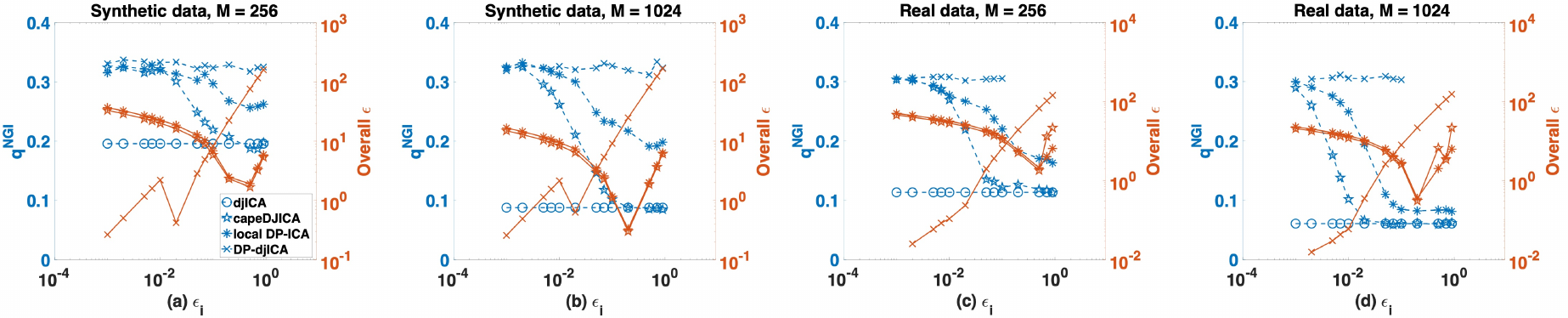}\\
  \vspace{-0.1in}
  \caption{Variation of $q^\mrm{NGI}$ and overall $\epsilon$ with privacy parameter $\epsilon_i$ for: (a)-(b) synthetic fMRI data, (c)-(d) real fMRI data. For $\capedjica$, higher $\epsilon_i$ results a smaller $q^\mrm{NGI}$, but not necessarily a small overall $\epsilon$, i.e., an optimal $\epsilon_i$ can be chosen based on $q^\mrm{NGI}$ or overall $\epsilon$ requirement.}
  \vspace{-0.1in}
  \label{fig:real_synth_eps}
\end{figure*}

\begin{figure*}[t]
  \centering
  \includegraphics[width=0.95\textwidth]{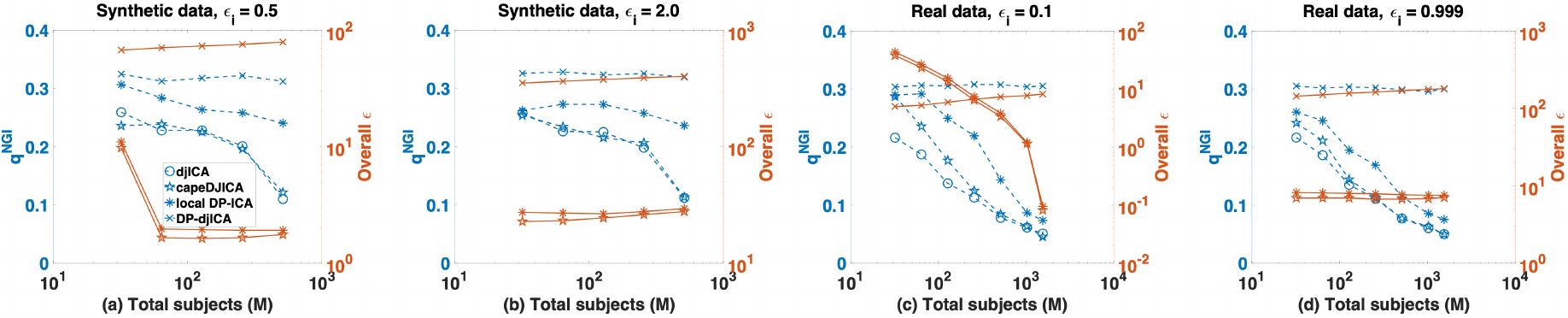}\\
  \vspace{-0.1in}
  \caption{Variation of $q^\mrm{NGI}$ and overall $\epsilon$ with total number of subjects $M$ for: (a)-(b) synthetic fMRI data, (c)-(d) real fMRI data. For $\capedjica$, higher $M$ results a smaller $q^\mrm{NGI}$ and a smaller overall $\epsilon$.}
  \vspace{-0.2in}
  \label{fig:real_synth_M}
\end{figure*}

In this section, we empirically show the effectiveness of the proposed $\capedjica$ algorithm. We note the intricate relationship between $\epsilon$ and $\delta$ (see Theorem \ref{thm:capedjica}) due to the correlated noise scheme and the challenge of characterizing the overall privacy loss in our multi-round $\capedjica$ algorithm. We designed the experiments to better demonstrate the trade-off between performance and several parameters: $\epsilon$, $\delta$ and $M$. We show the simulation results to compare the performance of our $\capedjica$ algorithm with the existing DP \djICA\ algorithm~\cite{imtiaz2016} ($\dpdjica$), the non-private \djICA\ algorithm~\cite{baker2018} and a DP ICA algorithm operating on only local data ($\local$). We modified the base non-private \djICA\ algorithm to incorporate the gradient bounds $B_G$ and $B_h$. Although we are proposing an algorithm for decentralized setting, we included the performance indices for the local setting to show the effect of smaller sample sizes on the performance. We note that the $\dpdjica$ algorithm~\cite{imtiaz2016} offers $\epsilon$-differential privacy as opposed to $(\epsilon, \delta)$-differential privacy offered by $\capedjica$. For both synthetic and real datasets, we consider the symmetric setting (i.e., $N_s = \frac{N}{S}$, $\tau_G^s = \tau_G$ and $\tau_h^s = \tau_h$). We limited the maximum number of iterations $J$ to be 1000 (however, the number of iterations varies with the algorithm and amount of noise). We chose the norm bounds $B_G = 30$, $B_h = \sqrt{B_G}$, number of sites $S=4$, $S_C = \ceil*{\frac{S}{3}} - 1$, the target $\delta = 10^{-5}$ and the learning rate $\rho = \frac{0.015}{\log (R)}$. We show the average performance over 10 independent runs. \textcolor{blue}{Note that, the choice of hyper-parameters is non-trivial~\cite{Chaudhuri2013} and corresponding end-to-end privacy analysis is still an open problem.}

\noindent\textbf{Synthetic Data. }We generated the synthetic data from the same model as~\cite{baker2018}. The source signals $\matr{S}$ were simulated using the generalized autoregressive (AR) conditional heteroscedastic (GARCH) model \cite{engle1982, bollerslev1986}. We used $M = 1024$ simulated subjects in our experiments. For each subject, we generated $R=20$ time courses with 250 time points. The data samples are equally divided into $S = 4$ sites. For each subject, the fMRI images are $30\times 30$ dimensional. We employ the $\capepca$ algorithm~\cite{imtiaz2018} (Appendix~\ref{appendix:capepca}) as a preprocessing stage to reduce the sample dimension from $D = 900$ to $R = 20$. The $\capedjica$ is carried out upon the $R$-dimensional samples. 

\noindent\textbf{Real Data. } We use the same data and preprocessing as Baker et al.~\cite{baker2018}: the data were collected using a 3-T Siemens Trio scanner with a 12-channel radio frequency coil, according to the protocol in Allen et al.~\cite{allen2011}. In the dataset, the resting-state scan durations range from 2 min 8 sec to 10 min 2 sec, with an average of 5 min 16 sec~\cite{baker2018}. We used a total of $M = 1548$ subjects from the dataset and estimated $R=50$ independent components using the algorithms under consideration. For details on the preprocessing, please see~\cite{baker2018}. We also projected the data onto a 50-dimensional PCA subspace estimated using pooled non-private PCA. As we do not have the ground truth for the real data, we computed a \emph{pseudo} ground truth~\cite{baker2018} by performing a pooled non-private  analysis on the data and estimating the unmixing matrix. The performance of $\capedjica$, \djICA, $\dpdjica$ and $\local$ algorithms are evaluated against this pseudo ground truth. 

\noindent\textbf{Performance Index. }We set $\tau_G^s = \frac{\Delta_G^s}{\epsilon_i}\sqrt{2\log \frac{1.25}{10^{-2}}}$ and $\tau_h^s = \frac{\Delta_h^s}{\epsilon_i}\sqrt{2\log \frac{1.25}{10^{-2}}}$ for our experiments, where $\epsilon_i$ is the privacy parameter per iteration, $\Delta_G^s$ and $\Delta_h^s$ are the $\mathcal{L}_2$ sensitivities of $\matr{G}_s$ and $\vect{h}_s$, respectively. To evaluate the performance of the algorithms, we consider the quality of the estimated unmixing matrix $\matr{W}$. More specifically, we utilize the normalized gain index $q^\mrm{NGI}$~\cite{baker2018, oja2011} that quantizes the quality of $\matr{W}$. The normalized gain index $q^\mrm{NGI}$ varies from 0 to 1, with lower values indicating a better estimation of a set of ground-truth components (i.e. the unmixing matrix times the mixing matrix is closer to an identity matrix~\cite{oja2011}). For practical usability of the recovered $\matr{A}$, we need to achieve $q^\mrm{NGI} \leq 0.1$~\cite{baker2018}. We consider the overall $\epsilon$ as a performance index. We plotted the overall $\epsilon$ (with solid lines on the right $y$-axis) along with $q^\mrm{NGI}$ (with dashed lines on the left $y$-axis) as a means for visualizing how the privacy-utility trade-off varies with different parameters. For a given privacy budget (performance requirement), the user can use the overall $\epsilon$ plot on the right $y$-axis, shown with solid lines, ($q^\mrm{NGI}$ plot on the left $y$-axis, shown with dashed lines) to find the required $\epsilon_i$ or $M$ on the $x$-axis and thereby, find the corresponding performance (overall $\epsilon$). We computed the overall $\epsilon$ for the $\capedjica$ and the $\local$ algorithms using the RDP technique (Section~\ref{sec:djica-renyi-dp}) and for the $\dpdjica$ algorithm using the composition theorem~\cite{dwork2013algorithmic}. Note that, we are reporting the privacy spent during the course of the gradient descent. The total privacy spent including the PCA would be slightly higher.

\begin{figure*}[t]
  \centering
  \includegraphics[width=0.75\textwidth]{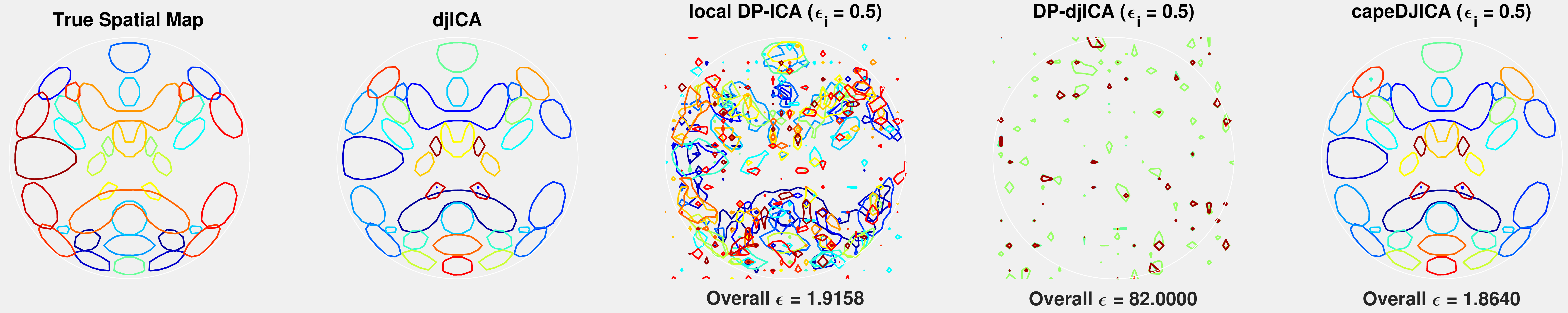}\\
  \vspace{-0.1in}
  \caption{Recovered spatial maps from synthetic data: the ground truth and the ones resulting from \djICA, $\local$, $\dpdjica$ and $\capedjica$.}
  \vspace{-0.1in}
  \label{fig:spatial_maps_orig_and_djica}
\end{figure*}

\begin{figure*}[t]
  \centering
  \includegraphics[width=0.95\textwidth]{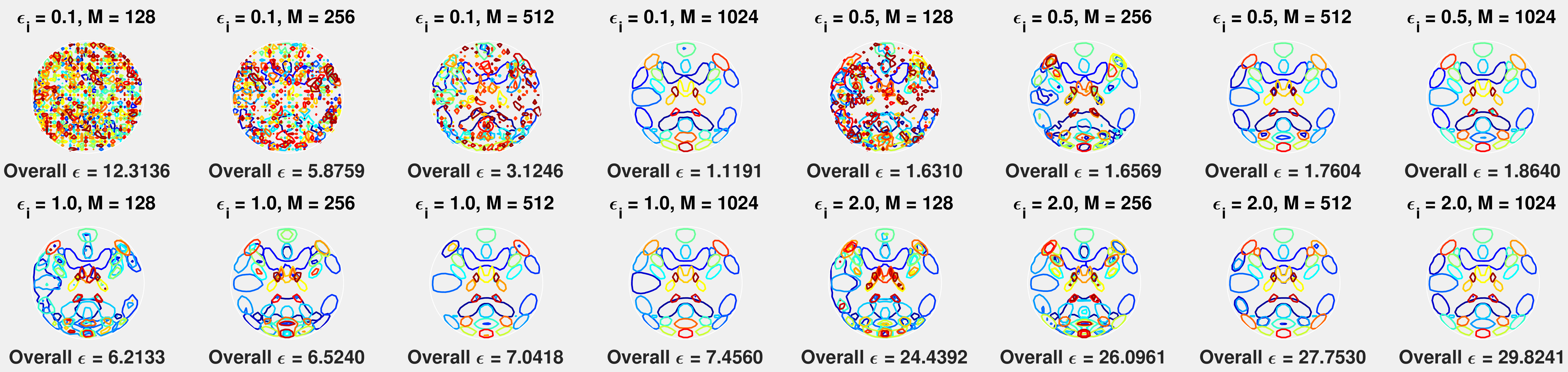}\\
  \vspace{-0.1in}
  \caption{Spatial maps (synthetic data) resulting from $\capedjica$ for different parameters. $\capedjica$ estimates spatial maps that closely resemble the true ones, even for strict privacy guarantee (small overall $\epsilon$).}
  \vspace{-0.2in}
  \label{fig:spatial_maps_eps_M_table}
\end{figure*}

\noindent\textbf{Performance Variation with $\epsilon$.} First, we explore how the privacy-utility tradeoff between $q^\mrm{NGI}$ and the overall ``privacy risk'' $\epsilon$ varies with $\epsilon_i$. As mentioned before, we compare the performance of $\capedjica$ with those of the \djICA, the $\dpdjica$ and $\local$.

In Figs. \ref{fig:real_synth_eps}(a) - (d), we show the variation of $q^\mrm{NGI}$ and overall $\epsilon$ for different algorithms with $\epsilon_i$ on synthetic and real data. For both datasets, we show the performance indices for two different $M$ values, namely $M = 256$ and $M = 1024$. We observe from the figures that the proposed $\capedjica$ outperforms the existing $\dpdjica$ by a large margin for the range of $\epsilon_i$ values that results in $q^\mrm{NGI} \leq 0.1$. This is expected as $\dpdjica$ suffers from too much noise (see Section \ref{sec:dist_djica_perf_gain} for the explanation). $\capedjica$ also guarantees the smallest overall $\epsilon$ among the privacy-preserving methods. $\capedjica$ can reach the utility level of the non-private \djICA\ for some parameter choices and naturally outperforms $\local$ as estimation of the sources is much accurate when more samples are available. For the same privacy loss (i.e., for a fixed $\epsilon$), one can achieve better performance by increasing the number of subjects. For both synthetic and real data, we note that assigning a higher $\epsilon_i$ may provide a good $q^\mrm{NGI}$ but does not guarantee a small overall $\epsilon$. The user needs to choose the $\epsilon_i$ based on the ``privacy budget'' and the required performance.

\noindent\textbf{Performance Variation with $M$.} Next, in Figs. \ref{fig:real_synth_M}(a) - (d), we show the variation of $q^\mrm{NGI}$ and the overall $\epsilon$ with the total number of subjects $M$ for two different $\epsilon_i$ values on synthetic and real data. We observe similar trends in performance as in the case of varying $\epsilon_i$. The $\capedjica$ algorithm outperforms the $\dpdjica$ and the $\local$: with respect to both $q^\mrm{NGI}$ and the overall $\epsilon$. For the $q^\mrm{NGI}$, the $\capedjica$ performs very closely to the non-private \djICA, even for moderate $M$ values, while guaranteeing the smallest overall $\epsilon$. The performance gain over $\dpdjica$ is particularly noteworthy. For a fixed number of subjects, increasing $\epsilon$ results in a slightly better utility, albeit at the cost of greater privacy loss. We show the performance variation with $\delta$ in Appendix \ref{appendix:perf-delta}.

\noindent\textbf{Reconstructed Spatial Maps.} Finally, we intend to demonstrate how the estimated spatial maps (the estimated global mixing matrix $\matr{A}$, see Section~\ref{sec:problem_formulation}) actually look like, as interpretability is one of the most important concerns for fMRI applications. In Figure \ref{fig:spatial_maps_orig_and_djica}, we show the true spatial map, the ones estimated from the non-private \djICA~\cite{baker2018}, $\local$, $\dpdjica$ and $\capedjica$ algorithms. It is evident from the figure that the spatial map recovered by the proposed $\capedjica$ is very close to that of the ground truth. The overall $\epsilon$ is also very small. The $\local$, although can achieve a small $\epsilon$, cannot recover the spatial maps well enough for practical purposes. However, increasing the $\epsilon_i$ and/or increasing the number of subjects would certainly improve the quality of the spatial maps. Finally, for the $\dpdjica$, the algorithm fails to converge to anything meaningful due to excessive amount of noise.

In Figure \ref{fig:spatial_maps_eps_M_table}, we show the estimated spatial maps resulting from the proposed $\capedjica$ algorithm along with the overall $\epsilon$ for a variety of combinations of $\epsilon_i$ and $M$. We observe that when sufficiently large number of subjects are available, the estimated spatial maps closely resemble the true one, even for strict privacy guarantee (small overall $\epsilon$). For smaller number of samples, we may need to compensate by allowing larger $\epsilon$ values to achieve good utility. In general, we observe that $\capedjica$ can achieve very good approximate to the true spatial map, almost indistinguishable from the non-private spatial map. This emphasizes the effectiveness of the proposed $\capedjica$ in the sense that very meaningful utility can be achieved even with strict privacy guarantee.

\section{Conclusion}\label{sec:conclusion}

We proposed a novel decentralized DP computation scheme, $\cape$, which is best suited for ML applications involving locally held private data. Example scenarios include health care research with legal and ethical limitations on the degree of sharing the ``raw'' data. $\cape$ can greatly improve the privacy-utility tradeoff when (a) all parties follow the protocol and (b) the number of colluding sites is not more than $\ceil*{S/3} - 1$. Our proposed $\cape$ protocol is based on an estimation-theoretic analysis of the noise addition process for differential privacy and therefore, provides different guarantees than cryptographic approaches such as SMC. $\cape$ can achieve the same level of additive noise variance as the pooled data scenario in certain regimes (assuming the availability of some reasonable resources), and can be extended to asymmetric network/privacy settings. We analytically show that $\cape$ can be applied to numerous decentralized ML problems of interest. 
Leveraging the effectiveness of the $\cape$ scheme, we proposed a new and improved algorithm for DP decentralized joint independent component analysis ($\capedjica$) for collaborative source separation. Additionally, to address the privacy composition for multi-round algorithms, we analyzed our $\capedjica$ algorithm using R\'enyi differential privacy. To better measure for the privacy loss per iteration, we used the moments accountant method. We empirically compared the performance of the proposed algorithms with those of conventional, non-private and local algorithms on synthetic and real datasets. We varied privacy parameters and relevant dataset parameters to show that the proposed algorithms outperformed the conventional and local algorithms comfortably and matched the performance of the non-private algorithms for some parameter choices. In general, the proposed algorithms offered very good utility even for strong privacy guarantees -- indicating achievability of meaningful privacy even without loosing much utility. An interesting future work could be to extend the $\cape$ framework to $(\epsilon,0)$-differential privacy, perhaps using the Staircase Mechanism~\cite{geng2015} for differential privacy. Another possible direction is to extend $\cape$ to be employable in arbitrary tree-structured networks.

\bibliography{refs.bib}
\bibliographystyle{IEEEtran}

\clearpage

\appendices
\section{Problems with Conventional Decentralized DP Computations}\label{appendix:conventional-decentralized-dp}
As mentioned before, DP algorithms often introduce noise in the computation pipeline to induce the randomness. For additive noise mechanisms, the standard deviation of the noise is scaled to the \emph{sensitivity} of the computation~\cite{dwork2013algorithmic}. To illustrate, consider estimating the mean $f(\vect{x}) = \frac{1}{N} \sum_{n = 1}^N x_n$ of $N$ scalars $\vect{x} = [ x_1,\ldots, x_{N-1},\ x_N]^{\top}$ with each $x_i \in [0,1]$. The sensitivity of the function $f(\vect{x})$ is $\frac{1}{N}$. Therefore, for computing the $(\epsilon, \delta)$-DP estimate of the average $a = f(\vect{x})$, we can follow the Gaussian mechanism~\cite{dwork2006} to release $\hat{a}_\mrm{pool} = a + e_\mrm{pool}$, where $e_\mrm{pool} \sim \mathcal{N}\left(0, \tau_\mrm{pool}^2\right)$ and $\tau_\mrm{pool} = \frac{1}{N\epsilon}\sqrt{2\log \frac{1.25}{\delta}}$.

Suppose now that the $N$ samples are equally distributed among $S$ sites. That is, each site $s \in \{1,\ldots,S\}$ holds a disjoint dataset $\vect{x}_s$ of $N_s = N/S$ samples. An aggregator wishes to estimate and publish the mean of all the samples. For preserving privacy, the conventional DP approach is for each site to release (or send to the aggregator node) an $(\epsilon, \delta)$-DP estimate of the function $f(\vect{x}_s)$ as: $\hat{a}_s = f(\vect{x}_s) + e_s$, where $e_s \sim \mathcal{N}\left(0, \tau_s^2\right)$ and $\tau_s = \frac{1}{N_s\epsilon}\sqrt{2\log \frac{1.25}{\delta}} = \frac{S}{N\epsilon}\sqrt{2\log \frac{1.25}{\delta}}$. The aggregator can then compute the $(\epsilon, \delta)$-DP approximate average as $\hat{a}_\mrm{conv} = \frac{1}{S}\sum_{s=1}^S \hat{a}_s = \frac{1}{S}\sum_{s=1}^S a_s + \frac{1}{S}\sum_{s=1}^S e_s$. The variance of the estimator $\hat{a}_\mrm{conv}$ is $S \cdot \frac{\tau_s^2}{S^2} = \frac{\tau_s^2}{S} \triangleq \tau_\mrm{conv}^2$. We observe the ratio: $\frac{\tau_\mrm{pool}^2}{\tau_\mrm{conv}^2} = \frac{\tau_s^2 / S^2}{\tau_s^2 / S} = \frac{1}{S}$. That is, the decentralized DP averaging scheme will always result in a poorer performance than the pooled data case. We propose a protocol that improves the performance of such systems by assuming some reasonable resources.

\section{Additional Background Concepts}\label{appendix:background}
\noindent\textbf{R\'enyi Differential Privacy.} Analyzing the total privacy loss of a multi-shot algorithm, each stage of which is DP, is a challenging task. It has been shown~\cite{abadi2016, mironov2017} that the advanced composition theorem~\cite{dwork2013algorithmic} for $(\epsilon, \delta)$-differential privacy can be loose. The main reason is that one can formulate infinitely many $(\epsilon, \delta)$-DP algorithms for a given noise variance $\tau^2$. RDP offers a much simpler composition rule that is shown to be tight~\cite{mironov2017}. We review some properties of RDP~\cite{mironov2017}.

\begin{Prop}[From RDP to differential privacy~\cite{mironov2017}]\label{prop:rdp_to_dp}
If $\mathcal{A}$ is an $(\alpha, \epsilon_r)$-RDP mechanism, then it also satisfies $\left(\epsilon_r + \frac{\log \frac{1}{\delta_r}}{\alpha-1}, \delta_r\right)$-differential privacy for any $0 < \delta_r < 1$.
\end{Prop}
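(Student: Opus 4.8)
The plan is to run the classical ``privacy loss random variable plus Markov's inequality'' argument that underlies every RDP-to-DP conversion. Fix adjacent $D, D' \in \mathbb{D}$ and let $P$ and $Q$ denote the output densities of $\mathcal{A}(D)$ and $\mathcal{A}(D')$ on $\mathbb{T}$. I would introduce the privacy loss $L(x) = \log\frac{P(x)}{Q(x)}$ and use the change-of-measure identity
\[
\mathbb{E}_{x\sim P}\!\left[e^{(\alpha-1)L(x)}\right] = \int \frac{P(x)^{\alpha}}{Q(x)^{\alpha-1}}\,dx = \mathbb{E}_{x\sim Q}\!\left[\left(\tfrac{P(x)}{Q(x)}\right)^{\alpha}\right]
\]
to rewrite the hypothesis $D_\alpha(P\|Q)\le\epsilon_r$ as the exponential-moment bound $\mathbb{E}_{x\sim P}\!\left[e^{(\alpha-1)L}\right] \le e^{(\alpha-1)\epsilon_r}$, where we use $\alpha>1$ so that $\alpha-1>0$.

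First I would apply Markov's inequality to the nonnegative variable $e^{(\alpha-1)L}$ at the threshold $\epsilon := \epsilon_r + \frac{\log(1/\delta_r)}{\alpha-1}$, which gives $\Pr_{x\sim P}[L\ge\epsilon] \le e^{(\alpha-1)(\epsilon_r-\epsilon)} = \delta_r$; the last equality is exactly the choice of $\epsilon$. Hence the ``bad set'' $B = \{x\in\mathbb{T} : \log\frac{P(x)}{Q(x)} \ge \epsilon\}$ satisfies $\Pr[\mathcal{A}(D)\in B]\le\delta_r$. Next, for an arbitrary measurable $\mathbb{S}\subseteq\mathbb{T}$ I would split $\Pr[\mathcal{A}(D)\in\mathbb{S}] = \Pr[\mathcal{A}(D)\in\mathbb{S}\cap B] + \Pr[\mathcal{A}(D)\in\mathbb{S}\setminus B]$; the first term is at most $\delta_r$, and on $\mathbb{S}\setminus B$ the pointwise inequality $P(x) < e^{\epsilon}Q(x)$ bounds the second term by $e^{\epsilon}\Pr[\mathcal{A}(D')\in\mathbb{S}]$. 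Adding the two pieces yields $\Pr[\mathcal{A}(D)\in\mathbb{S}] \le e^{\epsilon}\Pr[\mathcal{A}(D')\in\mathbb{S}] + \delta_r$, i.e.\ $\left(\epsilon_r + \frac{\log(1/\delta_r)}{\alpha-1},\,\delta_r\right)$-DP. Swapping the roles of $D$ and $D'$ (legitimate since adjacency is symmetric and RDP is assumed to hold for ordered pairs in both directions) gives the reverse inequality, completing the proof.

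The computations are routine; the part I would be most careful about is measure-theoretic rather than combinatorial. The density ratio $P/Q$ and the quantity $D_\alpha(P\|Q)$ are only meaningful when $P \ll Q$ — otherwise the R\'enyi divergence is $+\infty$ and the hypothesis is vacuous — so one restricts to that regime and notes the ratio is well-defined $Q$-a.e.\ and $P$-a.e. One also genuinely needs $\alpha>1$ for the exponentiated Markov step; the boundary case $\alpha=1$, where $D_1$ is the Kullback--Leibler divergence, is outside the scope of this statement and would need a separate Pinsker-type argument.
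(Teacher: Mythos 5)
Your argument is correct and complete. The change-of-measure identity $\mathbb{E}_{x\sim P}\left[e^{(\alpha-1)L(x)}\right]=\mathbb{E}_{x\sim Q}\left[\left(P(x)/Q(x)\right)^{\alpha}\right]$ correctly converts the hypothesis $D_\alpha(P\|Q)\le\epsilon_r$ into an exponential-moment bound, Markov's inequality at the threshold $\epsilon=\epsilon_r+\frac{\log(1/\delta_r)}{\alpha-1}$ yields $\Pr_{x\sim P}[L(x)\ge\epsilon]\le\delta_r$ with exactly the stated constants, and the split of an arbitrary measurable event over the bad set $B$ delivers the $(\epsilon,\delta_r)$-DP inequality; your caveats about requiring $\alpha>1$ and $P\ll Q$ are the right ones, as is the remark that the bound must be established for both orderings of the adjacent pair. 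Note, however, that the paper itself offers no proof of this proposition---it defers entirely to Mironov~\cite{mironov2017}---and the proof given there takes a genuinely different route: Mironov first derives a ``probability preservation'' inequality $\Pr[\mathcal{A}(D)\in\mathbb{S}]\le\left(e^{\epsilon_r}\Pr[\mathcal{A}(D')\in\mathbb{S}]\right)^{(\alpha-1)/\alpha}$ from H\"older's inequality applied to $\mathbb{E}_Q[\mathbb{1}_{\mathbb{S}}\cdot P/Q]$, and then performs a case analysis on whether $\Pr[\mathcal{A}(D')\in\mathbb{S}]$ exceeds $e^{-\epsilon_r}\delta_r^{\alpha/(\alpha-1)}$. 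Both derivations land on the same parameters. Your tail-bound route, which is the one standard in the moments-accountant literature, has the advantage of composing naturally (one sums log-moment bounds across iterations before applying Markov once), which matches how the paper actually uses RDP in Section~\ref{sec:djica-renyi-dp}; Mironov's route avoids introducing the privacy loss random variable and gives a pointwise-in-$\mathbb{S}$ intermediate inequality that is occasionally sharper for events of non-negligible mass.
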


\begin{Prop}[Composition of RDP~\cite{mironov2017}]\label{prop:composition_rdp}
Let $\mathcal{A}: \mathbb{D} \mapsto \mathbb{T}_1$ be $(\alpha, \epsilon_{r1})$-RDP and $\mathcal{B}: \mathbb{T}_1 \times \mathbb{D} \mapsto \mathbb{T}_2$ be $(\alpha, \epsilon_{r2})$-RDP, then the mechanism defined as $(X, Y)$, where $X \sim \mathcal{A}(D)$ and $Y \sim \mathcal{B}(X, D)$, satisfies $(\alpha, \epsilon_{r1}+\epsilon_{r2})$-RDP.
\end{Prop}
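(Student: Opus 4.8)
The plan is to bound the order-$\alpha$ R\'enyi divergence between the joint output distributions of $(X,Y)$ under an arbitrary pair of neighboring databases $D$ and $D'$, and then take the supremum over such pairs. First I would write the joint density under $D$ as a product $P(x,y) = p_1(x)\, p_2(y\mid x)$, where $p_1$ is the density of $\mathcal{A}(D)$ and $p_2(\cdot\mid x)$ is the density of $\mathcal{B}(x,D)$; similarly $Q(x,y) = q_1(x)\, q_2(y\mid x)$ for $D'$. Substituting this factorization into $D_\alpha(P\|Q) = \frac{1}{\alpha-1}\log \mathbb{E}_{(x,y)\sim Q}\left(\frac{P(x,y)}{Q(x,y)}\right)^\alpha$ and writing the expectation as an iterated integral (Fubini--Tonelli, the integrand being nonnegative), the likelihood-ratio factor splits as $\left(\frac{p_1(x)}{q_1(x)}\right)^\alpha \left(\frac{p_2(y\mid x)}{q_2(y\mid x)}\right)^\alpha$.

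Next I would carry out the inner integration over $y$ with $x$ held fixed. The inner integral $\int q_2(y\mid x)\left(\frac{p_2(y\mid x)}{q_2(y\mid x)}\right)^\alpha dy$ equals $\exp\left((\alpha-1)\, D_\alpha\left(\mathcal{B}(x,D)\,\|\,\mathcal{B}(x,D')\right)\right)$ by definition of the R\'enyi divergence. Since $D$ and $D'$ are neighboring and $\mathcal{B}$ is $(\alpha,\epsilon_{r2})$-RDP with the database in its second argument, this is at most $\exp\left((\alpha-1)\epsilon_{r2}\right)$ for \emph{every} value of $x$. Because the remaining integrand is nonnegative, I can pull this uniform bound outside the outer integral over $x$, which leaves $\exp\left((\alpha-1)\epsilon_{r2}\right)\cdot \mathbb{E}_{x\sim q_1}\left(\frac{p_1(x)}{q_1(x)}\right)^\alpha$.

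Finally I would apply $\frac{1}{\alpha-1}\log(\cdot)$ to this product. Since $\alpha>1$, the factor $\frac{1}{\alpha-1}$ is positive and $\log$ is increasing, so $D_\alpha(P\|Q)\le \epsilon_{r2} + \frac{1}{\alpha-1}\log \mathbb{E}_{x\sim q_1}\left(\frac{p_1(x)}{q_1(x)}\right)^\alpha = \epsilon_{r2} + D_\alpha\left(\mathcal{A}(D)\|\mathcal{A}(D')\right) \le \epsilon_{r1}+\epsilon_{r2}$, using that $\mathcal{A}$ is $(\alpha,\epsilon_{r1})$-RDP; taking the supremum over neighboring $(D,D')$ completes the argument. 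The step I expect to require the most care is the uniformity in the middle paragraph: the RDP guarantee for $\mathcal{B}$ must hold simultaneously for all realizations $x$ of the first stage, which is exactly the content of $\mathcal{B}\colon\mathbb{T}_1\times\mathbb{D}\mapsto\mathbb{T}_2$ being $(\alpha,\epsilon_{r2})$-RDP, and it is what legitimizes factoring the bound out before integrating in $x$. The only other technical points are choosing a common dominating measure for all the densities and separately handling the degenerate cases ($\alpha=1$ or $\alpha=\infty$, or failure of absolute continuity), none of which affects the core computation.
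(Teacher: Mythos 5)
Your argument is correct and is precisely the standard proof of adaptive RDP composition (Proposition 1 of Mironov's paper): factor the joint density, bound the inner integral over $y$ uniformly in $x$ using the RDP guarantee of $\mathcal{B}$, pull the bound out, and apply the RDP guarantee of $\mathcal{A}$ to what remains. The paper does not reprove this statement but defers to~\cite{mironov2017}, and your proof matches that reference's argument, including the correct identification of the key point that the bound $D_\alpha\left(\mathcal{B}(x,D)\,\|\,\mathcal{B}(x,D')\right) \leq \epsilon_{r2}$ must hold for every realization $x$ of the first stage.
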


\begin{Prop}[RDP and Gaussian Mechanism~\cite{mironov2017}]\label{prop:rdp_gauss_mech}
If $\mathcal{A}$ has $\mathcal{L}_2$ sensitivity 1, then the Gaussian mechanism $\matr{G}_\sigma \mathcal{A}(D) = \mathcal{A}(D) + E$, where $E\sim\mathcal{N}(0, \sigma^2)$ satisfies $(\alpha, \frac{\alpha}{2\sigma^2})$-RDP. Additionally, a composition of $J$ Gaussian mechanisms satisfies $(\alpha, \frac{\alpha J}{2\sigma^2})$-RDP.
\end{Prop}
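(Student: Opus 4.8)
The plan is to reduce the claim to the closed-form expression for the R\'enyi divergence between two Gaussians with a common variance, and then obtain the composition part directly from Proposition~\ref{prop:composition_rdp}. First I would fix a pair of neighboring databases $D, D'$ and observe that the output laws of $\matr{G}_\sigma\mathcal{A}$ are $P = \mathcal{N}(\mathcal{A}(D), \sigma^2\matr{I})$ and $Q = \mathcal{N}(\mathcal{A}(D'), \sigma^2\matr{I})$. Because both covariances equal $\sigma^2\matr{I}$, the order-$\alpha$ R\'enyi divergence factorizes over coordinates, so it suffices to treat the scalar case $P = \mathcal{N}(\mu_0,\sigma^2)$, $Q = \mathcal{N}(\mu_1,\sigma^2)$ and sum the per-coordinate contributions; the only data-dependent quantity that appears is $\|\mathcal{A}(D)-\mathcal{A}(D')\|_2^2$, which is bounded by the squared $\mathcal{L}_2$-sensitivity, namely $1$.

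The core computation is the Gaussian integral $\int p(x)^\alpha q(x)^{1-\alpha}\,dx$ in the scalar case. I would expand the two exponents, collect the quadratic in $x$ (its leading coefficient is $\alpha-(\alpha-1)=1>0$, so the integral is finite for every $\alpha>1$), complete the square, and integrate out the resulting Gaussian; that integral cancels exactly against the surviving product of normalizing constants. Using the identity $-(\alpha\mu_0-(\alpha-1)\mu_1)^2 + \alpha\mu_0^2 - (\alpha-1)\mu_1^2 = -\alpha(\alpha-1)(\mu_0-\mu_1)^2$, the leftover exponent is $\tfrac{\alpha(\alpha-1)(\mu_0-\mu_1)^2}{2\sigma^2}$, and dividing its logarithm by $\alpha-1$ gives $D_\alpha(P\|Q) = \tfrac{\alpha(\mu_0-\mu_1)^2}{2\sigma^2}$. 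Summing over coordinates and applying the sensitivity bound yields $D_\alpha(\matr{G}_\sigma\mathcal{A}(D)\,\|\,\matr{G}_\sigma\mathcal{A}(D')) = \tfrac{\alpha\|\mathcal{A}(D)-\mathcal{A}(D')\|_2^2}{2\sigma^2} \le \tfrac{\alpha}{2\sigma^2}$, which is precisely $(\alpha,\tfrac{\alpha}{2\sigma^2})$-RDP.

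The composition claim then follows by applying Proposition~\ref{prop:composition_rdp} inductively: since each of the $J$ Gaussian mechanisms is $(\alpha,\tfrac{\alpha}{2\sigma^2})$-RDP and the RDP parameter adds under adaptive composition at a fixed order $\alpha$, the $J$-fold composition is $(\alpha,\tfrac{\alpha J}{2\sigma^2})$-RDP. I expect the only real obstacle to be bookkeeping in the scalar integral — completing the square and verifying the quadratic identity above without sign errors, and checking that the coordinatewise factorization of the R\'enyi divergence interacts correctly with the sensitivity bound in the multivariate setting. There is no conceptual difficulty beyond that.
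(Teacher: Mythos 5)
Your proof is correct and is essentially the standard argument: the paper itself does not prove this proposition but defers to Mironov~\cite{mironov2017}, and your computation of the order-$\alpha$ R\'enyi divergence between two equal-variance Gaussians (including the quadratic identity yielding $\alpha(\alpha-1)(\mu_0-\mu_1)^2$) is exactly the calculation given there, with the composition part following from Proposition~\ref{prop:composition_rdp} by induction as you say. No gaps.
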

\noindent The proofs of the Propositions \ref{prop:rdp_to_dp}, \ref{prop:composition_rdp} and \ref{prop:rdp_gauss_mech} are provided in~\cite{mironov2017}.

\noindent\textbf{The Moments Accountant.} The moments accountant method can be used to achieve a much smaller overall $\epsilon$ than the strong composition theorem~\cite{dwork2013algorithmic}. As mentioned before, na\"ively employing the additive nature of the privacy loss results in the worst case analysis, i.e., assumes that each iteration step exposes the worst privacy risk and this exaggerates the total privacy loss. However, in practice, the privacy loss is a random variable that depends on the dataset and is typically well-behaved (concentrated around its expected value). Let us consider the randomized mechanism $\mathcal{A}:  \mathbb{D} \mapsto \mathbb{T}$. For a particular outcome $o \in \mathbb{T}$ of the mechanism and neighboring datasets $D, D' \in \mathbb{D}$, the privacy loss random variable is defined~\cite{abadi2016} as
\begin{align}\label{eqn:priv_loss_rv}
Z &= \log\frac{\Pr[\mathcal{A}(D) = o]}{\Pr[\mathcal{A}(D') = o]} \mbox{ w.p. } \Pr[\mathcal{A}(D) = o].
\end{align}
Note that the basic idea of~\cite{abadi2016} for accounting for the total privacy loss is to compute the moment generating function (MGF) of $Z$ for each iteration, use composition to get the MGF of the complete algorithm and then use that to compute final privacy parameters (see Theorem 2 of~\cite{abadi2016}). The stepwise moment for any $t$ at iteration $j$ is defined~\cite{abadi2016} as
\begin{align}\label{eqn:mgf_itr}
\alpha_j(t) &= \sup_{D, D'} \log\mathbb{E}\left[\exp(tZ)\right].
\end{align}
If total number of iterations is $J^*$ then the overall moment is upper bounded as $\alpha(t) \leq \sum_{j=1}^{J^*} \alpha_j(t)$. Finally, for any given $\epsilon > 0$, the overall mechanism is $(\epsilon, \delta)$ DP for $\delta = \min_t \exp\left(\alpha(t) - t\epsilon\right)$.

\subsection{The Secure Aggregation Protocol}\label{appendix:secureagg}
For completeness, we describe the $\secureagg$ scheme~\cite{Bonawitz17} in more detail; this method generates correlated noise that fits our honest-but-curious setup. $\secureagg$ can securely compute sums of vectors. It has a constant number of rounds and a low communication overhead. It requires only one aggregator with limited trust and is robust to site-dropouts (up to a certain fraction). The aggregator only learns the inputs from the sites in aggregate and the overall scheme is proven to be secure against honest but curious adversaries. The scheme relies on Shamir's $t$-out-of-$n$ Secret Sharing~\cite{shamir1979} and Diffie-Hellman (DH) key agreement~\cite{diffie1976}. It is parameterized over a finite field $\mathbb{F}$ of size at least $l > 2^\lambda$ (where $\lambda$ is the security parameter of the scheme). The scheme has four stages: i) advertise keys, ii) share keys, iii) masked input collection and iv) unmasking. In the first stage, each site generates DH key pairs and sends the public keys to the aggregator. The aggregator waits for enough nodes and then broadcasts received public keys. In the second stage, each site $s_1$ generates a random seed $b_{s_1}$ and computes shared private key $k_{s_1, s_2}$ with site $s_2$. Site $s_1$ then computes $t$-out-of-$n$ shares of $b_{s_1}$ and $k_{s_1, s_2}$ and sends encrypted shares. The aggregator again waits for enough sites to send these encrypted shares. In the third stage, the aggregator forwards the received encrypted shares. Each site $s_1$ computes and sends masked input $e_{s_1}$. The aggregator waits for enough sites. In the last stage, the aggregator broadcasts a list of non-dropped-out sites. Each site $s_1$ sends shares of $b_{s_1}$ for these sites and the public keys $k_{s_1}^{PK}$ for dropped-out sites. Finally, the aggregator reconstructs the sum $\sum_s e_s$ and broadcasts to sites.

\section{Proof of Lemma~\ref{lemma:cape}}\label{appendix:cape_lemma}
\begin{proof}
We prove the lemma according to~\cite{imtiaz2018}. Recall that in the pooled data scenario, the sensitivity of the function $f(\vect{x})$ is $\frac{1}{N}$, where $\vect{x} = \left[\vect{x}_1,\ldots, \vect{x}_S\right]$. Therefore, to approximate $f(\vect{x})$ satisfying $(\epsilon, \delta)$ differential privacy, we need to have additive Gaussian noise standard deviation at least $\tau_\mrm{pool} = \frac{1}{N\epsilon}\sqrt{2\log\frac{1.25}{\delta}}$. Next, consider the distributed data setting (as in Section~\ref{sec:problem_formulation}) with local noise standard deviation given by $\tau_s = \frac{1}{N_s\epsilon}\sqrt{2\log\frac{1.25}{\delta}} = \frac{S}{N\epsilon}\sqrt{2\log\frac{1.25}{\delta}} = \tau$. We observe $\tau_\mrm{pool} = \frac{\tau_s}{S} \implies \tau_\mrm{pool}^2 = \frac{\tau^2}{S^2}$. We will now show that the $\cape$ algorithm will yield the same noise variance of the estimator at the aggregator. Recall that at the aggregator we compute $a_\mrm{cape} = \frac{1}{S} \sum_{s=1}^S \hat{a}_s = \frac{1}{N} \sum_{n=1}^{N} x_n + \frac{1}{S} \sum_{s=1}^S g_s$. The variance of the estimator $a_\mrm{cape}$ is: $\tau_\mrm{cape}^2 \triangleq S\cdot \frac{\tau_g^2}{S^2} = \frac{\tau_g^2}{S} = \frac{\tau^2}{S^2}$, which is exactly the same as the pooled data scenario. Therefore, the $\cape$ algorithm allows us to achieve the same additive noise variance as the pooled data scenario in the symmetric setting ($N_s = \frac{N}{S}$ and $\tau_s^2 = \tau^2\ \forall s \in [S]$), while satisfying $(\epsilon, \delta)$ differential privacy at the sites and for the final output from the aggregator, where $(\epsilon, \delta)$ satisfy the relation $\delta = 2\frac{\sigma_z}{\epsilon - \mu_z}\phi\left(\frac{\epsilon - \mu_z}{\sigma_z}\right)$.
\end{proof}

\section{Performance Improvement of $\cape$}\label{appendix:perf_gain}
\begin{figure}[ht]
  \centering
  \includegraphics[width=0.9\columnwidth]{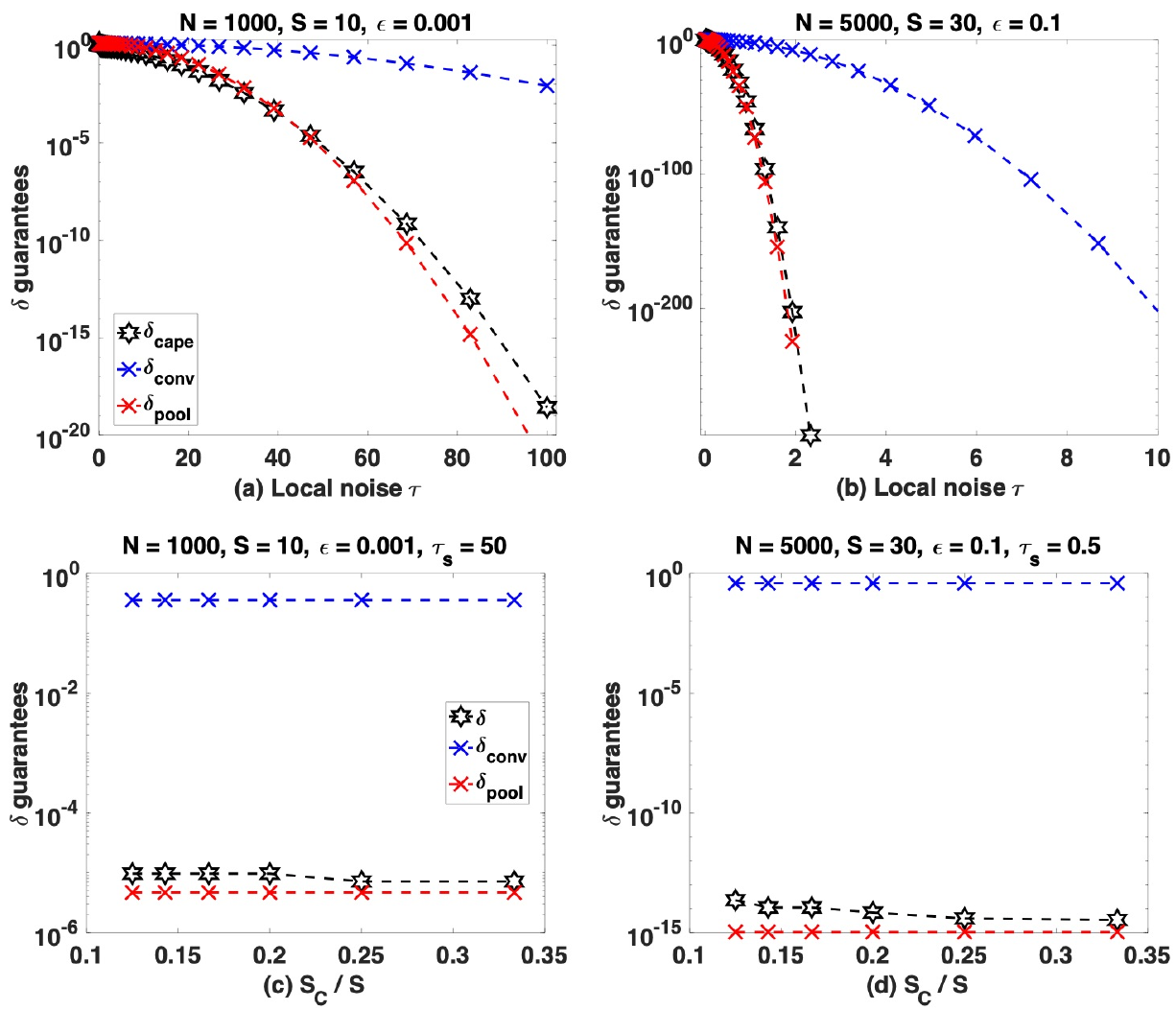}\\
  \vspace{-0.0in}
  \caption{Variation of $\delta_\mrm{cape}$, $\delta_\mrm{conv}$ and $\delta_\mrm{pool}$ with (a) -- (b) $\tau_s$ and (c) -- (d) $\frac{S_C}{S}$ for different values of $S$ and $\epsilon$}
  \label{fig:eff_delta}
\end{figure}
\begin{proof}[Proof of Proposition~\ref{prop:gain}]
The local noise variances are $\{\tau_s^2\}$ for $s \in [S]$. In the conventional decentralized DP scheme, we compute the following at the aggregator: $a_\mrm{conv} = \frac{1}{S}\sum_{s=1}^S a_s + \frac{1}{S}\sum_{s=1}^S e_s$. The variance of the estimator is: $\tau_\mrm{conv}^2 = \sum_{s=1}^s \frac{\tau_s^2}{S^2} = \frac{1}{S^2} \sum_{s=1}^s \tau_s^2$. In the $\cape$ scheme, we compute the following quantity at the aggregator: $a_\mrm{cape} = \frac{1}{S}\sum_{s=1}^S a_s + \frac{1}{S}\sum_{s=1}^S e_s + \frac{1}{S}\sum_{s=1}^S g_s$. The variance of the estimator is: $\tau_\mrm{cape}^2 = \sum_{s=1}^s \frac{\tau_g^2}{S^2} = \frac{1}{S^3} \sum_{s=1}^s \tau_s^2$. Therefore, the $\cape$ scheme provides a reduction $G = \frac{\tau_\mrm{conv}^2}{\tau_\mrm{cape}^2} = S$ in noise variance over conventional decentralized DP approach in the symmetric setting ($N_s = \frac{N}{S}$ and $\tau_s^2 = \tau^2\ \forall s \in [S]$), which completes the proof.
\end{proof}

\section{Proof of Proposition~\ref{prop:low_sensitivity}}\label{appendix:low_sensitivity}
First, we review some definitions and lemmas~\cite[Proposition C.2]{majorization} necessary for the proof. 
\begin{Def}[Majorization]
Consider two vectors $\vect{a} \in \mathbb{R}^S$ and $\vect{b} \in \mathbb{R}^S$ with non-increasing entries (i.e., $a_i \geq a_j$ and $b_i \geq b_j$ for $i < j$). Then $\vect{a}$ is majorized by $\vect{b}$, denoted $\vect{a} \prec \vect{b}$, if and only if the following holds:
\begin{align*}
\sum_{s=1}^S a_s &= \sum_{s=1}^S b_s \mbox{ and } \sum_{s=1}^J a_s \leq \sum_{s=1}^J b_s\ \forall J \in [S].
\end{align*}
\end{Def}

\noindent Consider $\vect{n}_\mrm{sym} \triangleq \frac{N}{S}[1,\ldots,1] \in \mathbb{R}^S$ for some positive $N$. Then any vector $\vect{n} = [N_1,\ldots,N_S] \in \mathbb{R}^S$ with non-increasing entries and $\sum_{s=1}^S |N_s| = N$ majorizes $\vect{n}_\mrm{sym}$, or $\vect{n}_\mrm{sym} \prec \vect{n}$.

\begin{Def}[Schur-convex functions]\label{def: SchurCVX}
The function $K: \mathbb{R}^{S} \mapsto \mathbb{R}$ is Schur-convex if for all $\vect{a} \prec \vect{b} \in \mathbb{R}^{S}$ we have $K(\vect{a}) \leq K(\vect{b})$.
\end{Def}

\begin{lemma}\label{lem: schurconx}
If $K$ is symmetric and convex, then $K$ is Schur-convex. The converse does not hold.
\end{lemma}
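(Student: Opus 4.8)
The plan is to reduce Schur-convexity of a symmetric convex $K$ to the geometric content of majorization: $\vect{a} \prec \vect{b}$ means precisely that $\vect{a}$ lies in the convex hull of the coordinate-permutations of $\vect{b}$. Concretely, I would first invoke the Hardy--Littlewood--P\'olya characterization (see~\cite{majorization}): $\vect{a} \prec \vect{b}$ holds if and only if $\vect{a} = P\vect{b}$ for some doubly stochastic matrix $P \in \mathbb{R}^{S\times S}$. By the Birkhoff--von Neumann theorem, such a $P$ is a convex combination of permutation matrices, so there are permutation matrices $\Pi_1,\ldots,\Pi_L$ and weights $\lambda_1,\ldots,\lambda_L \ge 0$ with $\sum_{\ell=1}^{L}\lambda_\ell = 1$ and $\vect{a} = \sum_{\ell=1}^{L}\lambda_\ell \Pi_\ell \vect{b}$.

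The key step is then a one-line estimate:
\[
K(\vect{a}) = K\!\left( \sum_{\ell=1}^{L}\lambda_\ell \Pi_\ell \vect{b} \right) \le \sum_{\ell=1}^{L}\lambda_\ell\, K(\Pi_\ell \vect{b}) = \sum_{\ell=1}^{L}\lambda_\ell\, K(\vect{b}) = K(\vect{b}),
\]
where the inequality uses convexity of $K$ and the third equality uses that $K$ is invariant under coordinate permutations (symmetry). This gives $K(\vect{a}) \le K(\vect{b})$, which is exactly Schur-convexity as defined above.

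The main obstacle, if one wants a self-contained argument rather than citing the doubly-stochastic representation, is supplying that structural input. I would handle it via the standard ``Robin Hood''/$T$-transform route: when $\vect{a} \prec \vect{b}$ and $\vect{a} \neq \vect{b}$, one passes from $\vect{b}$ to $\vect{a}$ through finitely many steps, each replacing a coordinate pair $(b_i, b_j)$ with $b_i > b_j$ by $(b_i - \theta,\, b_j + \theta)$ for a suitable $\theta \in (0, b_i - b_j]$ while staying in the majorization order. Since $(b_i-\theta,\, b_j+\theta)$ is a convex combination of $(b_i, b_j)$ and its swap $(b_j, b_i)$, applying $K$ in those two coordinates (others fixed) together with convexity and symmetry shows $K$ does not increase at any step; chaining the steps gives the result. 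The remaining bookkeeping---that the sequence of $T$-transforms terminates (each reduces an appropriate integer potential, e.g.\ the number of coordinates in which the current vector differs from $\vect{a}$)---is routine.

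For the converse, I would exhibit a symmetric Schur-convex function that is not convex, e.g.\ $K(x_1,x_2) = \sqrt{|x_1 - x_2|}$ on $\mathbb{R}^2$. It is symmetric, and if $(a_1,a_2) \prec (b_1,b_2)$ with $a_1 \ge a_2$ and $b_1 \ge b_2$, then $a_1 - a_2 = 2a_1 - (a_1 + a_2) \le 2b_1 - (b_1 + b_2) = b_1 - b_2$ (both sides nonnegative), so $K(\vect{a}) \le K(\vect{b})$; hence $K$ is Schur-convex. But the restriction $x_1 \mapsto K(x_1,0) = \sqrt{|x_1|}$ is not convex on $\mathbb{R}$, so $K$ is not convex, and the converse fails.
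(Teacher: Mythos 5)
Your proof is correct. The paper does not actually prove this lemma---it imports it directly from~\cite[Proposition C.2]{majorization}---so there is no in-paper argument to compare against; your route (the Hardy--Littlewood--P\'olya characterization of $\prec$ via doubly stochastic matrices, Birkhoff--von Neumann, then convexity plus permutation-invariance), with the $T$-transform argument as a self-contained fallback, is precisely the standard textbook proof of that cited result. The counterexample $K(x_1,x_2)=\sqrt{|x_1-x_2|}$ also correctly witnesses the failure of the converse: it is symmetric and Schur-convex by the monotonicity argument you give, yet its restriction to the line $x_2=0$ is $\sqrt{|x_1|}$, which is not convex, so $K$ cannot be convex.
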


\begin{proof}[Proof of Proposition~\ref{prop:low_sensitivity}]
As the sites are computing the function $f$ with $\mathcal{L}_2$ sensitivity $\Delta(N)$, the local noise standard deviation for preserving privacy is proportional to $\Delta(N_s)$ by Gaussian mechanism~\cite{dwork2006}. It can be written as: $\tau_s = \Delta(N_s) C$, where $C$ is a constant for a given $(\epsilon, \delta)$ pair. Similarly, the noise standard deviation in the pooled data scenario can be written as: $\tau_\mrm{pool} = \Delta(N) C$. Now, the final noise variance at the aggregator for $\cape$ protocol is: $\tau^2_\mrm{cape} = \sum_{s=1}^S \frac{\tau_g^2}{S^2} = \frac{1}{S^3} \sum_{s=1}^S \Delta^2(N_s) C^2$. Observe: $H(\vect{n}) = \frac{\tau_\mrm{cape}^2}{\tau_\mrm{pool}^2} = \frac{\sum_{s=1}^S \Delta^2(N_s)}{S^3 \Delta^2(N)}$. As we want to achieve the same noise variance as the pooled-data scenario, we need $S^3 \Delta^2(N) = \sum_{s=1}^S \Delta^2(N_s)$, which proves the case for general sensitivity function $\Delta(N)$. Now, if $\Delta^2(N)$ is convex then the by Lemma~\ref{lem: schurconx} (Supplement) the function $K(\vect{n}) = \sum_{s=1}^S \Delta^2(N_s)$ is Schur-convex. Thus, the minimum of $K(\vect{n})$ is obtained when $\vect{n} = \vect{n}_\mrm{sym} = \left[\frac{N}{S},\ldots,\frac{N}{S}\right]$. We observe: $K_{\min}(\vect{n}) = \sum_{s=1}^S \Delta^2\left(\frac{N}{S}\right) = S \cdot \Delta^2\left(\frac{N}{S}\right)$. Therefore, for convex $\Delta(N)$, we have $H(\vect{n}) = 1$ if $\Delta(\frac{N}{S}) = S \Delta(N)$. 
\end{proof}

\section{Empirical Comparison of $\delta$ and $\delta_\mrm{conv}$}\label{appendix:eff_delta}
Recall that the $\cape$ protocol guarantees $(\epsilon, \delta)$-DP with $\delta = 2\frac{\sigma_z}{\epsilon - \mu_z}\phi\left(\frac{\epsilon - \mu_z}{\sigma_z}\right)$. We claim that this $\delta$ guarantee is much better than the $\delta$ guarantee in the conventional distributed DP scheme. As $\delta_\mrm{cape}$ is an implicit function of $S,\ S_C$ and $\tau_s^2$, we experimentally compare $\delta_\mrm{cape}$ with $\delta_\mrm{conv}$ and $\delta_\mrm{pool}$, where $\delta_\mrm{conv}$ and $\delta_\mrm{pool}$ are the smallest $\delta$ guarantees we can afford in the conventional distributed DP scheme and the pooled-data scenario, respectively, to achieve the same noise variance as the pooled-data scenario for a given $\epsilon$. Additionally, we are interested in how the $\delta_\mrm{cape}$, $\delta_\mrm{conv}$ and $\delta_\mrm{pool}$ vary with weaker collusion assumption (i.e., fewer colluding sites). To that end, we first plot $\delta_\mrm{cape}$, $\delta_\mrm{conv}$ and $\delta_\mrm{pool}$ against different $\tau_s$ values for $S_C = \ceil*{\frac{S}{3}} - 1$ and different combinations of $\epsilon$ and $S$ in Figure \ref{fig:eff_delta}(a)-(b). Next, we vary the fraction $\frac{S_C}{S}$ and plot the resulting $\delta_\mrm{cape}$, $\delta_\mrm{conv}$ and $\delta_\mrm{pool}$ for different combinations of $\epsilon$, $S$ and $\tau_s$ in Figure \ref{fig:eff_delta}(c)-(d). We observe from the figures that $\delta_\mrm{cape}$ is always smaller than $\delta_\mrm{conv}$ and smaller than the $\delta_\mrm{pool}$ for some $\tau$ values. That is, we are ensuring a much better privacy guarantee by employing the $\cape$ scheme over the conventional approach for achieving the same noise level at the aggregator output as the pooled data scenario. 

\section{Analysis of $\capedjica$ with the Moments Accountant Method}\label{appendix:moments-accountant}
We presented some preliminaries regarding moments accountant method in Appendix~\ref{appendix:background}. We now employ the framework to our $\capedjica$ algorithm and find the best $\epsilon$ for a given $\delta$. For a Gaussian mechanism $\matr{G}_\sigma \mathcal{A}(D) = \mathcal{A}(D) + E$, where $E\sim\mathcal{N}(0, \sigma^2)$, the privacy loss random variable defined in \eqref{eqn:priv_loss_rv} can be written as $Z = \log\frac{\exp\left(-\frac{\left(o - f_D\right)^2}{2\sigma^2}\right)}{\exp\left(-\frac{\left(o - f_D'\right)^2}{2\sigma^2}\right)} = \frac{2o(f_D - f_D') - (f_D^2 - {f_D'}^2)}{2\sigma^2}$. Here, the random variable $o$ is Gaussian with $o \sim \mathcal{N}(f_D, \sigma^2)$. Therefore, it can be shown that the random variable $Z$ is Gaussian: $Z \sim \mathcal{N}\left(\frac{(f_D - f_D')^2}{2\sigma^2}, \frac{(f_D - f_D')^2}{\sigma^2}\right)$. Using the moment generating function of generalized Gaussian, we have $\mathbb{E}[\exp(tZ)] = \exp\left(\frac{(f_D - f_D')^2}{2\sigma^2}\ t + \frac{1}{2}\frac{(f_D - f_D')^2}{\sigma^2}\ t^2\right) = \exp\left(\frac{(f_D - f_D')^2}{2\sigma^2}(t + t^2)\right)$. If the $\mathcal{L}_2$ sensitivity of the function $\mathcal{A}(D)$ is $\Delta$ then $\alpha_j(t) = \sup_{D,D'} \log \mathbb{E}[\exp(tZ)] = \sup_{D,D'} \frac{(f_D - f_D')^2}{2\sigma^2}(t + t^2) = \frac{\Delta^2}{2\sigma^2}(t + t^2)$. We can compute the upper bound of the overall moment: $\alpha(t) \leq \sum_{j=1}^{J^*} \alpha_j(t) = \frac{J^*\Delta^2}{2\sigma^2}(t + t^2)$. Now, for any given $\epsilon > 0$, we have $\delta  = \min_t \exp\left(\alpha(t) - t\epsilon\right) = \min_t \exp\left(\frac{J^*\Delta^2}{2\sigma^2}(t + t^2) - t\epsilon\right)$. We compute the minimizing $t$ as $t_\mrm{opt} = \frac{1}{2}\left(\frac{2\epsilon\sigma^2}{J^* \Delta^2}-1\right)$. Using this, we find $\delta_\mrm{opt}= \exp\left(\frac{J^*\Delta^2}{2\sigma^2}(t_\mrm{opt} + t_\mrm{opt}^2) - t_\mrm{opt}\epsilon\right)$. As we are interested in finding the best $\epsilon$ for a given $\delta_\mrm{target}$, we rearrange the above equation to formulate a quadratic equation in terms of $\epsilon$ and then solve for $\epsilon$: $\frac{\sigma^2}{2J^*\Delta^2}\epsilon^2 - \epsilon + \frac{J^*\Delta^2}{8\sigma^2} + \log\delta_\mrm{target} = 0$. Now, for the $\capedjica$ algorithm, we release two noisy gradients: $\Delta_\matr{W}(j)$ and $\Delta_\vect{b}(j)$, at iteration $j$ with noise variances $\sigma^2_\matr{W}$ and $\sigma^2_\vect{b}$, respectively. Adjusting for this, we plot the total $\epsilon$ against the total iterations $J^*$ for the strong composition~\cite{dwork2010}, RDP and the moments accountant in Figure \ref{fig:moments_accountant}. We observe that the moments accountant and RDP methods provide a much smaller overall $\epsilon$ than the strong composition.

\begin{figure}[t]
  \centering
  \includegraphics[width=0.8\columnwidth]{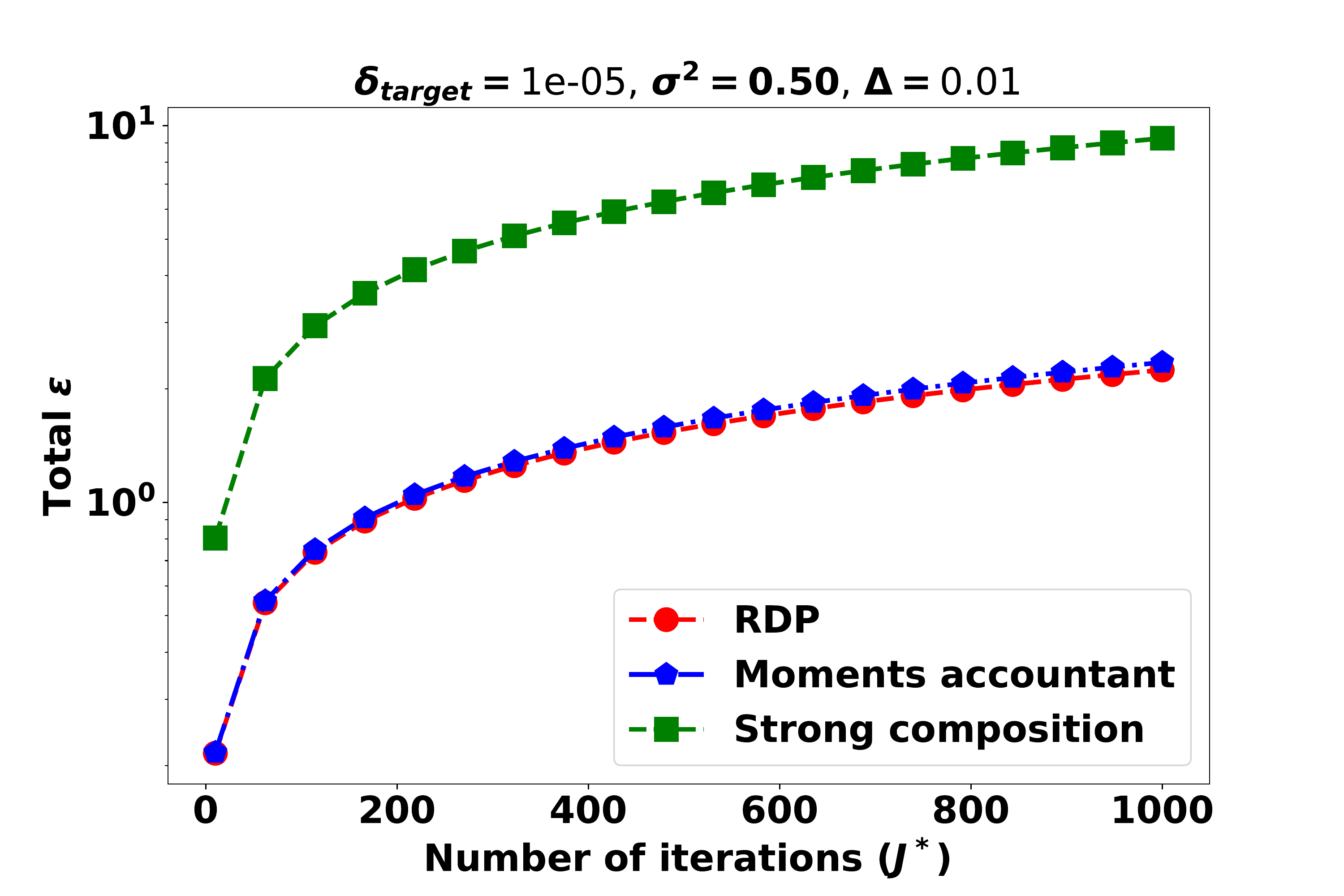}\\
  \vspace{-0.0in}
  \caption{Variation of total $\epsilon$ with number of iterations $J^*$: $\sigma_\matr{W}^2 = \sigma_\vect{b}^2 = 0.25$ and $\delta_\mrm{target} = 1e^{-5}$. Moments accountant and RDP methods provide a much smaller overall $\epsilon$ than the strong composition.}
  \vspace{-0.2in}
  \label{fig:moments_accountant}
\end{figure}

\section{Performance Variation with $\delta$}\label{appendix:perf-delta}
Recall that, the proposed $\capedjica$ algorithm guarantees $(\epsilon, \delta)$ differential privacy, where $(\epsilon, \delta)$ satisfy the relation $\delta = 2\frac{\sigma_z}{\epsilon - \mu_z}\phi\left(\frac{\epsilon - \mu_z}{\sigma_z}\right)$. In Figure \ref{fig:real_synth_delta}, we show the variation of $q^\mrm{NGI}$ with overall $\delta$ on synthetic and real data. Recall that $\delta$ is essentially the probability of failure of a DP algorithm. Therefore, we want $\delta$ to be small. However, a smaller $\delta$ also results in a larger noise variance, which affects the utility. From the figure, we can observe how the performance of the proposed $\capedjica$ algorithm varies with $\delta$ for a fixed $\epsilon_i = 0.5$. $\capedjica$ achieves very close utility to the non-private \djICA\ for both synthetic and real data. For both cases, the overall $\epsilon$ is also very small. However, we can opt for even smaller $\delta$ values at the cost of performance. 

\begin{figure}[t]
  \centering
  \includegraphics[width=0.48\textwidth]{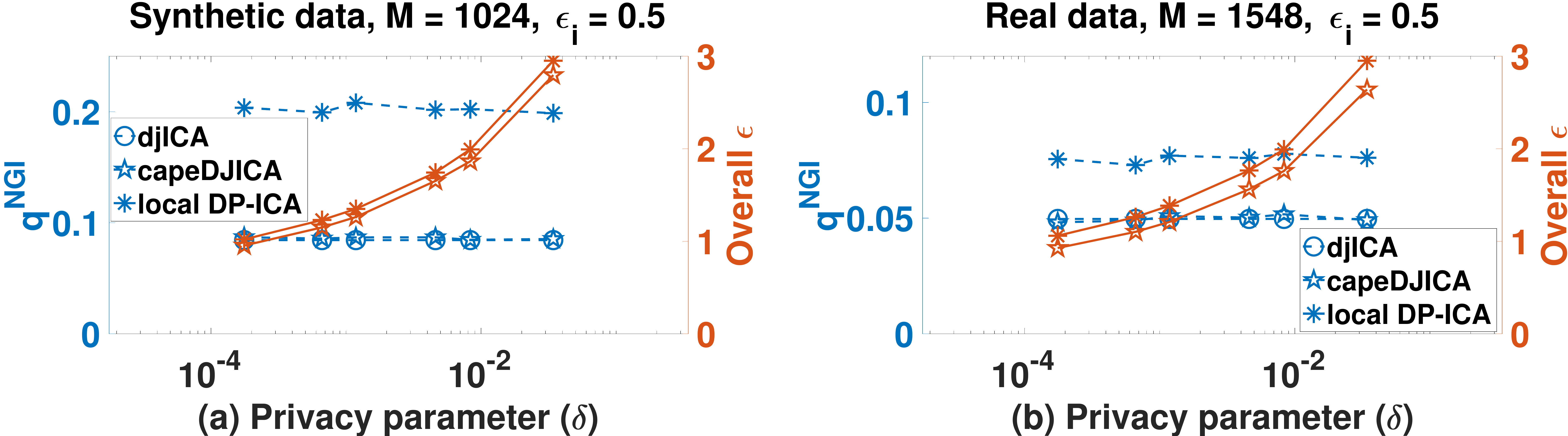}\\
  \vspace{-0.0in}
  \caption{Variation of $q^\mrm{NGI}$ and overall $\epsilon$ with privacy parameter $\delta$: (a) synthetic and (b) real fMRI data. Fixed parameters: $S = 4$, $\epsilon_i = 0.5$. $\capedjica$ achieves very close utility to the non-private \djICA\ with small overall $\epsilon$.}
  \vspace{-0.2in}
  \label{fig:real_synth_delta}
\end{figure}

\section{Communication Overhead for $\cape$} \label{appendix:cape_comm}
The conventional $D$-dimensional averaging needs only one message from each site, thus $SD$ or $\Theta(SD)$ is the communication complexity. Our $\cape$ scheme employs the $\secureagg$ protocol to compute the zero-sum noise. The $\secureagg$ protocol~\cite{Bonawitz17} entails an $O(S + D)$ overhead for each site and $O(S^2 + SD)$ for the server/aggregator. The rest of our scheme requires $\Theta(D)$ and $\Theta(SD)$ communication overheads for the sites and  the aggregator, respectively. On the other hand, the scheme proposed in~\cite{Mikko17} has a communication cost proportional to $(S+1)DM$ or $\Theta(SDM)$, where $M$ is the number of compute nodes. Goryczka et al.~\cite{SlawomirMPC17} compared several secret sharing, homomorphic encryption and perturbation-based secure sum aggregation and showed their communication complexities. Except for the secret sharing approach (which requires $O(S^2)$ overhead), the other approaches are $O(S)$ in communication complexity. A comparison of communication overhead for different algorithms are shown in Table \ref{tbl:commn_cost}.

\begin{table}[t]
\caption{Comparison of communication overhead}
\label{tbl:commn_cost}
\centering
\begin{tabular}{lll}
 \toprule
 \textbf{Algorithm} & \textbf{Site} & \textbf{Aggregator}\\
 \midrule
 $\cape$ & $O(S + D)$ & $O(S^2 + SD)$ \\
 Heikkil\"{a} et al.~\cite{Mikko17} & $\Theta(DM)$ & $\Theta(SDM)$\\
 Bonawitz et al.~\cite{Bonawitz17} & $O(S + D)$ & $O(S^2 + SD)$\\
 \bottomrule
\end{tabular}
\end{table}


\section{Decentralized Differentially Private PCA}\label{appendix:capepca}
As mentioned before, neuroimaging data samples are generally very high dimensional. We therefore use the recently proposed~\cite{imtiaz2018} DP decentralized PCA algorithm ($\capepca$) as an efficient and privacy-preserving dimention-reduction step of our proposed $\capedjica$ algorithm. 
We have shown a slightly modified version of the original $\capepca$ algorithm in Algorithm \ref{alg:dist_dpca} to match the robust $\cape$ scheme~\cite{imtiaz2019} we proposed in Section \ref{sec:cape}.

\begin{algorithm}[t] 
        \caption{Improved Decentralized Differentially Private PCA ($\capepca$)~\cite{imtiaz2018} \label{alg:dist_dpca}}
        \begin{algorithmic}[1]
    \Require Data matrix $\matr{X}_s \in \mathbb{R}^{D\times N_s}$ with $\|\vect{x}_{s,n}\|_2 \leq 1$, local noise variances $\{\tau_s^2\}$ for $s \in [S]$; reduced dimension $R$
    \For{$s = 1, 2, \ldots, S$} \Comment{at the local sites}
    	    \State Generate $\matr{E}_s \in \mathbb{R}^{D \times D}$ using Algorithm \ref{alg:zero-sum-noise-generation} (element-wise)
        \State Compute $\matr{C}_s \gets \frac{1}{N_s} \matr{X}_s \matr{X}_s^\top$
        \State Generate $D \times D$ symmetric $\matr{G}_s$, where $\{\left[\matr{G}_s\right]_{ij}: i \in [D], j \leq i\}$ are drawn i.i.d. $\sim \mathcal{N}(0, \tau_g^2 = \frac{1}{S} \tau_s^2)$, $[\matr{G}_s]_{ij} = [\matr{G}_s]_{ji}$
        \State Compute and send: $\hat{\matr{C}}_s \gets \matr{C}_s + \matr{E}_s + \matr{G}_s$
    \EndFor 
    \State Compute $\hat{\matr{C}} \gets \frac{1}{S}\sum_{s=1}^S \hat{\matr{C}}_s$ \Comment{at the aggregator}
    \State Perform SVD: $\hat{\matr{C}} = \matr{V} \matr{\Lambda} \matr{V}^\top$
    \State Release / send to sites: $\matr{V}_R$\\
    \Return $\matr{V}_R$
    \end{algorithmic}
\end{algorithm}

\section{Improved Differentially Private Decentralized Joint ICA ($\capedjica$) Algorithm}\label{appendix:capedjica}
\begin{algorithm*}[t]
\caption{Improved Differentially Private Decentralized Joint ICA ($\capedjica$) }\label{alg:capedjica}
\begin{algorithmic}[1]
\Require data $\{ \matr{X}_s^r \in \mathbb{R}^{R \times N_s} : s \in [S]\}$, tolerance level $t = 10^{-6}$, maximum iterations $J$, $\norm{\Delta_{\matr{W}}(0)}_2^2 = t$, initial learning rate $\rho = 0.015/\log(R)$, local noise standard deviations $\{\tau_G^s, \tau_h^s\}$, gradient bounds $\{B_G, B_h\}$
\State Initialize $j = 0$, $\matr{W} \in \mathbb{R}^{R \times R}$ \Comment{for example, $\matr{W} = \matr{I}$}
\While{$j < J$, $\norm{\Delta_{\matr{W}}(j)}_2^2 \geq t$}
        \ForAll{sites $s = 1, 2, \ldots, S$}
        		\State Generate $\matr{E}_s^G \in \mathbb{R}^{R\times R}$ and $\vect{e}_s^h \in \mathbb{R}^R$ using Algorithm \ref{alg:zero-sum-noise-generation} (element-wise)
	   		    \State Generate $\matr{K}_s^G \in \mathbb{R}^{R\times R}$ and $\vect{k}_s^h \in \mathbb{R}^R$, as \hyperlink{generate_K}{described} in the text
              \State Compute $\matr{Z}_s(j) = \matr{W}(j-1) \matr{X}_s + \vect{b}(j-1) \vect{1}^\top$ \label{eq:local_source_update}
                \State Compute $\hat{\matr{Y}}_s(j) = \matr{1} - 2g(\matr{Z}_s(j))$ \label{eq:entropy_transform}
                \State Compute $\matr{G}_s(j) = \frac{1}{N_s} \sum_{n=1}^{N_s} \matr{G}_{s,n}(j)$, where $\matr{G}_{s,n}(j) = \frac{\left(\matr{I} + \hat{\vect{y}}_{s,n}\vect{z}_{s,n}^\top\right)\matr{W}(j-1)}{\max\left(1, \frac{1}{B_G}\big\| \left(\matr{I} + \hat{\vect{y}}_{s,n}\vect{z}_{s,n}^\top\right)\matr{W}(j-1) \big\|_F\right)}$ 
                \State Compute $\vect{h}_s(j) = \frac{1}{N_s} \sum_{n=1}^{N_s} \vect{h}_{s, n}(j)$, where $\vect{h}_{s, n}(j) = \frac{\hat{\vect{y}}_{s,n}}{\max\left(1, \frac{1}{B_h} \big\| \hat{\vect{y}}_{s,n} \big\|_2\right)}$ \label{eq:local_grad_step}
        
        \State Compute $\hat{\matr{G}}_s(j) = \matr{G}_s(j) + \matr{E}_s^G + \matr{K}_s^G$
        \State Compute $\hat{\vect{h}}_s(j) = \vect{h}_s(j) + \vect{e}_s^h + \vect{k}_s^h$
                \State Send $\hat{\matr{G}}_s(j)$ and $\hat{\vect{h}}_s(j)$ to the aggregator
        \EndFor
        \State Compute $\Delta_\matr{W}(j) = \rho \frac{1}{S} \sum^{S}_{s=1} \hat{\matr{G}}_s(j)$ \label{alg:dp_djica:gradW}\Comment{at the aggregator, update global variables}
    \State Compute $\Delta_\vect{b}(j) = \rho \frac{1}{S} \sum^{S}_{s=1} \hat{\vect{h}}_s(j)$ \label{alg:dp_djica:gradb}
        \State Compute $\matr{W}(j) = \matr{W}(j-1) + \Delta_\matr{W}(j)$
        \State Compute $\vect{b}(j) = \vect{b}(j-1)  + \Delta_\vect{b}(j)$ \label{eq:agg_bias}
        \State Check upper bound and perform learning rate adjustment (if needed)
        \State Send global $\matr{W}(j)$ and $\vect{b}(j)$ back to each site
        \State $j \gets j + 1$
\EndWhile\\
\Return The current $\matr{W}(j)$
\end{algorithmic}
\end{algorithm*}

\subsection{Privacy Analysis}\label{appendix:djica-naive-privacy}
For completeness, we present a theorem that provides the privacy guarantee of the $\capedjica$ algorithm using the conventional composition theorem~\cite{dwork2013algorithmic}. For a tighter privacy bound, please see Section~\ref{sec:djica-renyi-dp} and Section~\ref{sec:djica-moments-accountant}.

\begin{theorem}[Privacy of $\capedjica$ Algorithm]\label{thm:capedjica}
Consider Algorithm \ref{alg:capedjica} in the decentralized data setting of Section \ref{sec:problem_formulation} with $N_s = \frac{N}{S}$, $\tau_G^s = \tau_G$ and $\tau_h^s = \tau_h$ for all sites $s \in [S]$. Suppose that at most $S_C = \ceil*{\frac{S}{3}} - 1$ sites can collude after execution and the required number of iterations is $J^*$. Then Algorithm \ref{alg:capedjica} computes an $(2J^*\epsilon, 2J^*\delta)$-DP approximation to the optimal unmixing matrix $\matr{W}^*$, where $(\epsilon,\delta)$ satisfy the relation $\delta = 2\frac{\sigma_z}{\epsilon - \mu_z}\phi\left(\frac{\epsilon - \mu_z}{\sigma_z}\right)$ and $(\mu_z, \sigma_z)$ are given by \eqref{eq:CAPEmu} and \eqref{eq:CAPEsigma}, respectively.
\end{theorem}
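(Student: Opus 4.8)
The plan is to reduce Theorem~\ref{thm:capedjica} to a per-iteration application of Theorem~\ref{thm:cape} together with the basic (adaptive) composition theorem and post-processing invariance; no privacy machinery beyond what is already established for the single-shot $\cape$ protocol is needed.

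First I would isolate what is released in one iteration $j$ of Algorithm~\ref{alg:capedjica}. Each round consists of exactly two invocations of $\cape$ on subject-level data: one producing the noisy gradients $\hat{\matr{G}}_s(j) = \matr{G}_s(j) + \matr{E}_s^G + \matr{K}_s^G$ that the aggregator combines into $\Delta_\matr{W}(j) = \rho\frac1S\sum_s \hat{\matr{G}}_s(j)$, and one producing the noisy bias gradients $\hat{\vect{h}}_s(j) = \vect{h}_s(j) + \vect{e}_s^h + \vect{k}_s^h$ combined into $\Delta_\vect{b}(j)$. I would verify that the noise in each invocation is exactly the $\cape$ noise: $\matr{E}_s^G$ (resp.\ $\vect{e}_s^h$) is the zero-sum term from Algorithm~\ref{alg:zero-sum-noise-generation} with local variance $(\tau_G^s)^2$ (resp.\ $(\tau_h^s)^2$), hence has per-coordinate variance $(1-\tfrac1S)(\tau_G^s)^2$, while $\matr{K}_s^G$ (resp.\ $\vect{k}_s^h$) has variance $\tfrac1S(\tau_G^s)^2$, so the two together contribute variance $(\tau_G^s)^2$ --- matching the $\cape$ setting with per-site noise $\tau_G^s$, with $\sum_s \matr{E}_s^G = \matr 0$. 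I would then recall, from the discussion preceding~\eqref{eqn:grad_noise_variance}, that after clipping each subject's gradient contribution to $\mathcal{L}_2$-norm $B_G N_m$ (resp.\ $B_h N_m$) the subject-level sensitivities of $\matr{G}_s$ and $\vect{h}_s$ are $\Delta_G^s = \tfrac{2B_G}{M_s}$ and $\Delta_h^s = \tfrac{2B_h}{M_s}$, and that $\tau_G^s,\tau_h^s$ in~\eqref{eqn:grad_noise_variance} are scaled to these by the common factor $\tfrac1\epsilon\sqrt{2\log(1.25/\delta)}$. (Since Algorithm~\ref{alg:capedjica} takes the $\capepca$-reduced data $\matr{X}_s^r = \matr{V}_R^\top\matr{X}_s$ as input and the linear projection preserves the one-subject neighboring relation, the guarantee is for this stage; the privacy of the PCA preprocessing is accounted separately.)

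Next I would apply Theorem~\ref{thm:cape} to each of the two $\cape$ invocations. The proof of Theorem~\ref{thm:cape} uses only that a neighboring swap displaces a site's output by a vector of $\mathcal{L}_2$-norm equal to the sensitivity; since here the additive noise is i.i.d.\ across the $R^2$ (resp.\ $R$) coordinates and the induced cross-site covariance is likewise isotropic, vectorizing $\matr{G}_s$ (resp.\ $\vect{h}_s$) reduces the matrix/vector case to the scalar computation of Theorem~\ref{thm:cape} with $\tfrac1{N_s}$ replaced by $\Delta_G^s$ (resp.\ $\Delta_h^s$), so $(\mu_z,\sigma_z)$ retain the form of~\eqref{eq:CAPEmu}--\eqref{eq:CAPEsigma}. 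Because $\tau_G^s\propto\Delta_G^s$ and $\tau_h^s\propto\Delta_h^s$ with the \emph{same} proportionality constant, the sensitivity-to-noise ratio, and therefore $(\mu_z,\sigma_z)$ and the resulting $(\epsilon,\delta)$ relation, are identical for the two invocations. Conditioned on the current iterate $(\matr{W}(j-1),\vect{b}(j-1))$, the adversary's view at round $j$ --- the noisy outputs of the $S_H$ non-colluding sites plus the partial noise sum recoverable from the aggregator and the colluding sites, for each of the two gradients --- is thus $(\epsilon,\delta)$-DP for $\Delta_\matr{W}(j)$ and $(\epsilon,\delta)$-DP for $\Delta_\vect{b}(j)$; since one subject can influence both $\matr{G}_s$ and $\vect{h}_s$, basic composition makes the whole round $(2\epsilon,2\delta)$-DP. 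The hypothesis $S_C = \ceil*{\tfrac S3}-1$ is exactly what is needed both for $\secureagg$ inside Algorithm~\ref{alg:zero-sum-noise-generation} to produce correct zero-sum noise and for the Theorem~\ref{thm:cape} analysis to hold; the colluding set and the disjoint local datasets are fixed across rounds, so it applies uniformly.

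Finally I would compose over the $J^*$ rounds. The state carried between rounds ($\matr{W}(j),\vect{b}(j)$, broadcast back to the sites and used to build the next gradients) is a deterministic function of the releases of rounds $1,\dots,j$, so the execution is an adaptive composition of $J^*$ mechanisms each $(2\epsilon,2\delta)$-DP; by the basic composition theorem~\cite{dwork2013algorithmic} the full transcript is $(2J^*\epsilon,\,2J^*\delta)$-DP, and the returned $\matr{W}(J^*)$ --- like every intermediate iterate --- is a post-processing of it, which preserves the guarantee. I expect the only delicate point to be bookkeeping rather than analysis: one must check that fresh correlated noise is drawn each round and that the adversary's cross-round view is precisely the concatenation of the per-round views (no extra leakage through the shared iterate), which is what licenses plain adaptive composition; the crude factor $J^*$ it produces is exactly what the RDP and moments-accountant treatments of Sections~\ref{sec:djica-renyi-dp}--\ref{sec:djica-moments-accountant} are designed to improve.
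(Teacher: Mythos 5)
Your proposal is correct and follows essentially the same route as the paper: apply Theorem~\ref{thm:cape} to each of the two per-iteration releases ($\Delta_\matr{W}(j)$ and $\Delta_\vect{b}(j)$) to get $(2\epsilon,2\delta)$-DP per round via basic composition, then compose over $J^*$ rounds to obtain $(2J^*\epsilon, 2J^*\delta)$-DP. Your write-up is considerably more careful than the paper's one-paragraph sketch --- in particular the verification that the noise variances match the $\cape$ setting, the vectorization step extending the scalar analysis of Theorem~\ref{thm:cape} to matrix-valued outputs, and the explicit appeal to adaptive composition and post-processing --- but these are elaborations of the same argument, not a different one.
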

\begin{proof}The proof of Theorem \ref{thm:capedjica} follows from using the Gaussian mechanism~\cite{dwork2006}, the decentralized Stochastic Gradient Descent algorithm~\cite{abadi2016, bassily2014, anand2015}, the $\mathcal{L}_2$ sensitivities of the functions $f(\matr{X}_s) = \matr{G}_s$ and $f(\matr{X}_s) = \vect{h}_s$ and the privacy of the $\cape$ scheme~\cite{imtiaz2019}. We recall that the data samples in each site are disjoint. By the $\cape$ scheme (see Theorem \ref{thm:cape}), each iteration round is $(2\epsilon, 2\delta)$-DP. If the required total number of iterations is $J^*$ then by composition theorem of differential privacy~\cite{dwork2013algorithmic}, the $\capedjica$ algorithm satisfies $(2J^*\epsilon, 2J^*\delta)$-differential privacy, where $(\epsilon,\delta)$ satisfy the relation $\delta = 2\frac{\sigma_z}{\epsilon - \mu_z}\phi\left(\frac{\epsilon - \mu_z}{\sigma_z}\right)$.
\end{proof}

\end{document}